\documentclass[12pt]{article}

\usepackage{lmodern}
\usepackage{amsmath,amssymb,amsthm}
\usepackage{bm}
\usepackage{mathtools}
\mathtoolsset{showonlyrefs}
\usepackage{algorithm,algpseudocode}
\usepackage{microtype}
\usepackage{pgfplots}
\pgfplotsset{compat=1.17}
\usepackage{booktabs}
\usepackage[numbers]{natbib}
\usepackage[margin=3.6cm]{geometry}
\usepackage{authblk}

\newtheorem{proposition}{Proposition}

\title{Trajectory Modeling via Random Utility \\ Inverse Reinforcement Learning}

\author{Anselmo R. Pitombeira-Neto\footnote{Email: anselmo.pitombeira@ufc.br}}
\affil{Department of Industrial Engineering, Federal University of Ceará, Fortaleza, Brazil}
\author{Helano P. Santos}
\author{Ticiana L. Coelho da Silva}
\author{José Antonio F. de Macedo}
\affil{Insight Lab, Federal University of Ceará, Fortaleza, Brazil}

\date{}

\begin{document}

\maketitle

\begin{abstract}
We consider the problem of modeling trajectories of drivers in a road network from the perspective of inverse reinforcement learning. Cars are detected by sensors placed on sparsely distributed points on the street network of a city. As rational agents, drivers are trying to maximize some reward function unknown to an external observer. We apply the concept of random utility from econometrics to model the unknown reward function as a function of observed and unobserved features. In contrast to current inverse reinforcement learning approaches, we do not assume that agents act according to a stochastic policy; rather, we assume that agents act according to a deterministic optimal policy and show that randomness in data arises because the exact rewards are not fully observed by an external observer. We introduce the concept of extended state to cope with unobserved features and develop a Markov decision process formulation of drivers decisions. We present theoretical results which guarantee the existence of solutions and show that maximum entropy inverse reinforcement learning is a particular case of our approach.  Finally, we illustrate Bayesian inference on model parameters through a case study with real trajectory data from a large city in Brazil.
\end{abstract}

\section{Introduction}
The ubiquity of GPS-enabled smartphones, automotive navigation systems connected to the Internet and traffic surveillance cameras have allowed the filtering and collection of large streams of trajectories from moving objects in real time. The acquired data can be used in different machine learning tasks, such as real-time detection of regularities (e.g., typical traffic flows over a road network, next location prediction in road networks) and anomalies (e.g., traffic jams), in this way aiding public or private agents in rapidly acting when confronting critical decision-making assignments in urban settings.

In this paper, we consider the problem of modeling trajectories of vehicles in a road network which are observed by external sensors located on sparse fixed points on the street network. In contrast to the majority of previous work on trajectory modeling, in which trajectories are made up of GPS traces, trajectories from external sensors are more sparse and noisy, which makes the problem of modeling trajectories more challenging \citep{cruz2019trajectory,cruz2020location}. While GPS traces have a high sample rate, which allows us to model trajectories as varying almost continuously over the time and space, external sensors are placed in fixed locations in the street network and observations from the same vehicle may be very far apart.

Figure \ref{fig:sensors} illustrates the distribution of 272 external sensors in the street network of the city of Fortaleza, Brazil. A vehicle trajectory is made up of a sequence of time-ordered stamps corresponding to the sequence of sensors which detected the vehicle's plate. Figure \ref{fig:traj_sample} exhibits a sample vehicle trajectory. As can be seen, the distance of two consecutive sensors which detected the vehicle may range from a few meters to some kilometers, showing the sparseness of the observations.
\begin{figure}[t]
    \centering
    \includegraphics[scale=0.52]{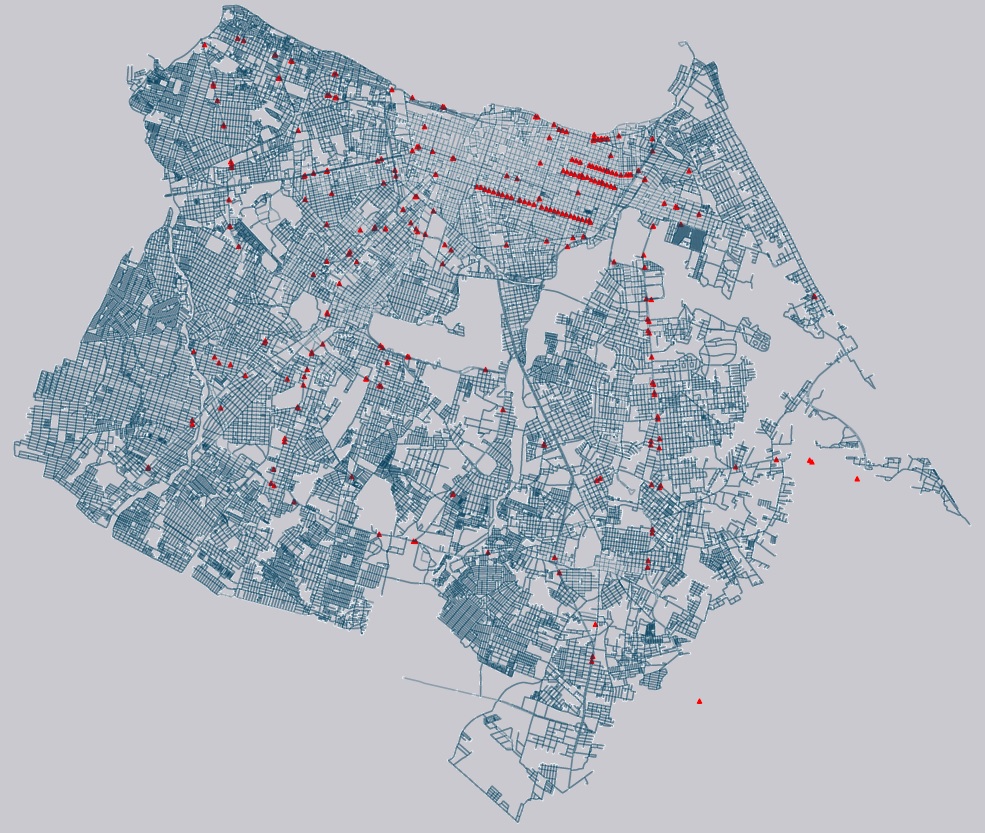}
    \caption{Set of 272 external sensors located on the street network in the city of Fortaleza, Brazil.}
    \label{fig:sensors}
\end{figure}
\begin{figure}[t]
    \centering
    \includegraphics[scale=0.11,trim=1.5cm 2.0cm 0.0cm 1.5cm, clip]{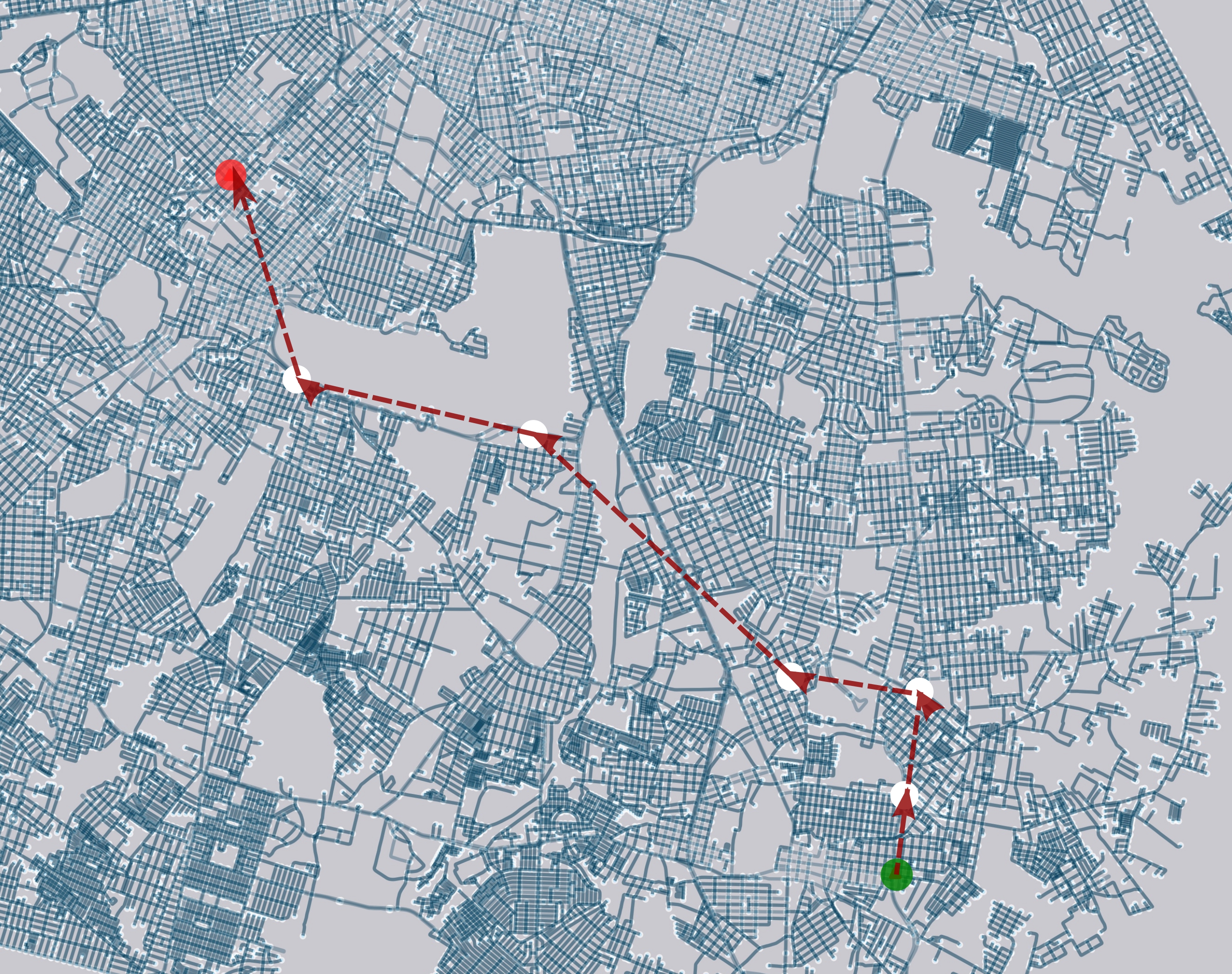}
    \caption{A sample trajectory. Initial and final dots indicate the origin and destination of a vehicle, respectively. Intermediary dots along the trajectory represent external sensors which detected the vehicle. The distance of two consecutive sensors which detected the vehicle may range from a few meters to a few kilometers.}
    \label{fig:traj_sample}
\end{figure}

Motivated by the problem of modeling sparse trajectories, we develop a novel approach which applies concepts from inverse reinforcement learning (IRL) and random utility theory. We assume that each trajectory is generated by a driver (an agent), who interacts with the road network (the environment) and makes decisions along the way so as to maximize the expected total reward while trying to reach a destination. At each current location in a trajectory, the driver makes the decision on the next location to go. However, as observers of the trajectories, we do not know the exact reward function the driver is trying to maximize.

IRL applications have modeled the reward function as a parameterized deterministic function of observable features and tries to learn the parameters from observed data generated by the agents. As the observed behavior cannot be exactly explained by the features, IRL assumes that the agent follows a stochastic policy; nevertheless, in reality agents do not choose their actions at random. For example, drivers do not choose a sequence of locations during a trajectory according to chance. Instead, drivers act rationally according to some particular reward function whose exact features taken into account are unknown to an external observer.

To formalize this notion more rigorously, we draw the concept of random utility from microeconomic theory to model the unknown reward function as a function of observable features plus an error term which represents features known only to the driver. We name this approach \emph{random utility inverse reinforcement learning}. Under this setting, we do not need to assume that agents act randomly. We keep the reasonable assumption that agents act rationally (i.e., optimally), but show that randomness in data arises because the exact rewards (or utilities) agents receive are not fully observed by us.

The contributions of this paper are the following:
\begin{enumerate}
    \item The proposition and formalization of a new approach named random utility inverse reinforcement learning (RU-IRL).
    \item Theoretical results which guarantee the existence of solutions and conditions for parameter estimation.
    \item A mathematical proof that maximum entropy reinforcement learning is a particular case of RU-IRL.
    \item A case study illustrating an application of RU-IRL using real data from a large city in Brazil.
\end{enumerate}

This paper is divided in the following sections: in Section \ref{sec:notation} we describe the notation used and provide a list of symbols; in Section \ref{sec:background}, we review the background theory and related work; in Section \ref{sec:MDP}, we propose a Markov decision process formulation of trajectory generation by drivers; in Section \ref{sec:parameter_estimation}, we discuss parameter estimation and establish conditions on identifiability of parameters; in Section \ref{sec:case}, we illustrate the application of RU-IRL to real trajectory data from the city of Fortaleza, Brazil; finally, we end the paper with some concluding remarks in Section \ref{sec:conclusion}.

\section{Notation}
\label{sec:notation}
We start by setting up the adopted notation. Throughout the paper, most variables are represented by lowercase letters even if they are random, unlike the usual convention adopted in statistics. Uppercase letters are used for matrices and some functions (disambiguation is made by context). Greek letters are reserved for parameters, with some exceptions, such as the letter $\epsilon$ used as a random variable (although parameters are also random variables under a Bayesian view) and $\tau$ used for trajectories. Vectors of parameters are typeset in boldface, while sets are typeset in calligraphic letters. We list below most symbols used and their corresponding meanings.
\begin{description}
\item[$\mathcal{S}$]: Set of possible states
\item[$\mathcal{A}_s$]: Set of possible decisions/actions at state $s$
\item[$\mathcal{A}$]: Set of possible decisions/actions at all possible states ($\mathcal{A} = \cup_{s \in \mathcal{S}} \mathcal{A}_s$)
\item[$\mathcal{D}$]: Set of destinations
\item[$\mathcal{T}$]: Set of observed trajectories
\item[$s$]: A state (a possible location at which a driver may be)
\item[$a$]: A decision/action (a possible location that a driver chooses to go next)
\item[$o$]: A location which is the origin of a trajectory
\item[$d$]: A location which is the destination of a trajectory
\item[$p(.)$]: A probability function
\item[$R(.)$]: A reward function
\item[$r(.)$]: The deterministic term of a reward function
\item[$v(.)$]: A value function
\item[$T$]: An operator between two spaces of functions
\item[$\alpha$]: The scale parameter of the random variable $\epsilon$
\item[$\bm{\beta}$]: Vector of parameters in the feature model of the deterministic reward function
\item[$\gamma$]: Discount factor
\item[$\epsilon$]: A random variable which represents unobserved features
\item[$\phi(.)$]: A basis function that represents some observed feature
\item[$\pi$]: A function from $\mathcal{S} \to \mathcal{A}$ called policy
\item[$\tau$]: A trajectory

\end{description}

\section{Background and Related Work}
\label{sec:background}
In this section, we review the main theoretical concepts and related work on which we base our modeling approach.

\subsection{Inverse Reinforcement Learning}
\label{sec:IRL}

In reinforcement learning (RL), an agent interacts with an environment, which may be in different states at each decision epoch, and makes decisions (take actions) which influence the next state of the environment \citep{sutton}. As a result of an action, the agent receives a reward. RL problems are formally described as Markov decision processes \citep{puterman}. At each decision epoch, the environment may be in one of different states $s \in \mathcal{S}$. The agent may choose an action $a \in \mathcal{A}_s$, after which the environment makes a transition to a new state $s'$ according to a probability function $p(s'|s,a)$ and the agent receives a reward $r(s,a)$. A key objective in RL is to train the agent so that it maximizes the cumulative sum of rewards over a finite or infinite time horizon. The behavior of the agent is synthesized in a function $\pi:\mathcal{S} \to \mathcal{A}$, called \emph{policy}, which associates each state of the environment with an action of the agent. In RL, the analyst often designs a reward function so as to train the agent to achieve a desired goal or finish a task.

In contrast, in inverse reinforcement learning (IRL) the analyst observes an agent interacting with an environment and does not know the exact reward function the agent is trying to maximize \citep{ng2000,abbeel2004}. The objective in IRL is to approximate, from a sample of observed trajectories, the reward function which drives the behaviors of the agents. The reward function is approximated by a parameterized model, such as a linear regression model or a neural network, and maximum likelihood or Bayesian inference is used to learn the parameters \citep{ramachandran2007bayesian,vroman2014maximum}. IRL has been recently applied to diverse domains, such as medical decision \citep{belogolovsky}, dynamic multiobjective optimization \citep{ZOU_irl} and video games \citep{ZHANG_irl}. A recent survey on IRL applications can be found in \citet{arora2021survey}.

A popular IRL paradigm is called maximum entropy IRL and was proposed by \citet{ziebart2008}. Let $\tau_i = [s_0, a_0, s_1, a_1, \dots, s_n]$ be an observed trajectory. Under maximum entropy IRL, the probability of $\tau_i$ is assumed to be given by the maximum entropy probability distribution that matches feature counts:
\begin{equation}
    p(\tau_i|\bm{\theta}) = \frac{1}{z(\bm{\theta})} e^{\sum_{t=1}^{n-1} r(s_t,a_t;\bm{\theta})},
    \label{eq:_max_ent}
\end{equation}
in which $r(s_t,a_t;\bm{\theta})$ is a parameterized reward function, $\bm{\theta}$ is a vector of parameters and $z(\bm{\theta})$ is the normalization constant (known as partition function in statistical physics). The authors propose the use of the maximum likelihood method to learn the parameters $\bm{\theta}$. Given a set of observed trajectories $\mathcal{T} = \{\tau_i\}_{i=1}^m$, assuming that trajectories are independent, the maximum likelihood estimate of the parameters is
\begin{equation}
    \hat{\bm{\theta}} = \arg\max_{\bm{\theta} \in \Theta}\prod_{i=1}^m  p(\tau_i|\bm{\theta}),
\end{equation}
in which $p(\tau_i|\bm{\theta})$ is given by \eqref{eq:_max_ent} and $\Theta$ is a parameter space. Notice that computing the normalization constant $z(\bm{\theta})$ is often intractable, since it involves the enumeration of all possible trajectories, which will be impossible in the case of a countable set of trajectories or computationally infeasible in the case of a finite but large set of possible trajectories. Consequently, $z(\bm{\theta})$ is in practice computed approximately.

In our application, the agents observed by an analyst are drivers and the environment is a road network. A driver executes a task of traveling between an origin and a destination and wants to achieve this efficiently in order to minimize total distance, time, or some other criteria which are unknown to the analyst. We only have access to a sequence of locations which forms a driver's trajectory between an origin and a destination, detected by external sensors on the street network. Our objective is to learn, from trajectory data, the parameters of a model of the unknown reward function. We use concepts from random utility theory, introduced in the next section, to model the unknown reward function.

\subsection{Random Utility Theory}
\label{sec:RUT}
Random utility theory and related discrete choice theory is a branch of econometrics which studies probabilistic models to explain the behavior of agents when making economic decisions \citep{mcfadden1981}. In random utility theory, an agent faces a set of alternatives $a \in \mathcal{A}$, often assumed a countable or finite set, and chooses an alternative with highest \emph{utility} (or reward). The exact utility function is known only to the agent, in a way that an analyst can only observe the choices of the agent, but does not know the utility function exactly. Thus, the analyst represents the utility function as
\begin{equation}
    R(a) = r(a) + \epsilon, \quad a \in \mathcal{A}, \label{eq:random_utility}
\end{equation}
in which $r(a)$ is the \emph{deterministic} utility, given by
\begin{equation}
    r(a) = \sum_{k=1}^K \phi_k(a) \beta_k, \label{eq:deterministic_utility}
\end{equation}
$\phi_k(a)$ are features associated with each alternative and observed by the analyst, $\beta_k$ are corresponding parameters and $\epsilon$ are error terms which account for features not observed by the analyst but known to the agent. The error terms may also be interpreted as the difference between the actual utility received by the agent after choosing an alternative and the deterministic utility explained by the features specified by the analyst.

As the utility function \eqref{eq:random_utility} is a random variable, it is not possible to predict exactly the choice of the agent, but we can compute a conditional probability distribution over the alternatives which is a function of the features and depends on the probability distribution of the error terms. The conditional probability that an alternative $a \in \mathcal{A}$ is chosen by the agent is given by
\begin{equation}
 p(a|\bm{\theta}) = \text{Pr}\{R(a) \geq R(a')\},\quad \forall a' \in \mathcal{A},   
\end{equation}
and $\bm{\theta}$ is a vector which collects all parameters including $\beta_k,  k \in  \{1, 2, \dots, K\}$. When error terms are assumed to be independent and identically distributed with a Gumbel density function (extreme value type I), one obtains the celebrated \emph{multinomial logit} model \citep[p.74]{train2009discrete}, with probability function over alternatives given by
\begin{equation}
    p(a|\bm{\theta}) = \frac{e^{r(a)/\alpha}}{\sum_{a' \in \mathcal{A}} e^{r(a')/\alpha}}, \quad \forall a \in \mathcal{A}, \label{eq:logit}
\end{equation}
in which $\alpha > 0$ is the scale parameter of the Gumbel error terms with expected values equal to zero.  It can also be shown \citep[p.161]{cascetta2009transportation} that the expected value of the maximum utility is given by the \emph{log-sum-exp} formula
\begin{equation}
    \mathbb{E}\Big[\max_{a \in \mathcal{A}} R(a)\Big] = \alpha \ln \Bigg( \sum_{a \in \mathcal{A}} e^{r(a)/\alpha} \Bigg). \label{eq:logsum}
\end{equation}

The multinomial logit model is the most used random utility model by virtue of its mathematical simplicity and computational tractability. It is worth noting though that we can build alternative probability models, such as the multinomial probit, nested logit and mixed logit, depending on the specified model structure and probability distributions assumed for the error terms. Nevertheless, many of these alternative models do not enjoy closed formulas for the choice probabilities and rely on simulation methods. Random utility theory has been largely applied to transportation \citep{ben2018discrete} and marketing research \citep{zwerina1997discrete}. We refer to \citet{train2009discrete} for further theory on random utility models.

\subsection{Trajectory Modeling}
\label{sec:related}

Trajectory modeling is concerned with building statistical or machine learning models of observed trajectories of vehicles or people. Such models may have different uses, among which: computing the probability of observing a given trajectory for anomaly detection; estimating the importance of different characteristics that drivers may consider relevant when following a trajectory; recovering sparse or incomplete trajectories as the ones observed from external sensors; aiding drivers to choose an optimal route from an origin to a destination; or predicting online the next location of a vehicle given its current location. We comment below on a selection of papers which are somewhat related to our work.

\citet{ziebart2008navigate} proposed maximum entropy IRL to model trajectories tracked by GPS devices of a set of 25 taxi drivers. They illustrated their approach in the tasks of turn prediction, route prediction and destination prediction and compared their approach with Markov models. \citet{wu2016probabilistic} applied maximum entropy IRL to the problem of recovering trajectories from sparse GPS data. They used a regression model to estimate travel times in the road network and applied IRL to learn the latent costs of traversing the network.
\citet{zheng2014modeling} also proposed an approach based on maximum entropy IRL to trajectory modeling for trajectories tracked by GPS devices. They applied their approach to the tasks of route recommendation and anomalous trajectory detection.

\citet{wu2017modeling} presented one of the first works to apply recurrent neural networks (RNN) for trajectory modeling. RNNs can process sequences with arbitrary lengths and are commonly used in natural language processing applications such as modeling word transitions in a sentence. One of the difficulties in applying RNNs in this case is that drivers have to strictly follow the topology of the road network, which implies that only the transitions from one edge to its adjacent edges are possible. In order to overcome this limitation, the authors proposed two extensions to the basic RNN to address the issue of topological constraints. \citet{ji2020method} proposed an approach based on long-short term memory (LSTM) neural networks, a kind of RNN, to detect if an observed airplane trajectory is abnormal from spatio-temporal and semantic information. 

\citet{feng2018deepmove} proposed DeepMove, an attentional RNN for prediction of human trajectories, with the purpose of predicting the next location of a person given a current partial trajectory. In order to overcome some limitations of RNNs, \citet{feng2020learning} proposed a generative adversarial network (GAN) framework that integrates the domain knowledge of human mobility regularity. The framework, called MoveSim, includes a generator, which consists of a self-attention based sequential model to capture the temporal transitions in human mobility, and a discriminator, which consists of a mobility regularity-aware loss to distinguish the generated trajectory from a real one.

A noteworthy line of research is the application of trajectory modeling approaches to next location prediction. \citet{wu2017spatial} used trajectory data over a road network to train an RNN to model trajectories and predict the next location. \citet{Zhang2016KDD} proposed GMove, an ensemble of hidden Markov models to model trajectories for next location prediction. GMove uses spatiotemporal information and geo-tagged text extracted from online check-ins with each hidden Markov model based on a group of users sharing similar movements. \citet{Rocha2016ideas} proposed a suffix-tree to predict the next stop and the leave time from the actual location. \citet{Naserian2018FGCS} developed a model to predict the next location by grouping users who share similar characteristics and used sequential rules to estimates the probability of visiting a specific location given the recent movement of the user and his group.

\citet{Trasarti2017IS} proposed MyWay, a framework with the objective of predicting the next position based on the spatial match of trajectories to a set of profiles obtained by clustering raw trajectories.  \citet{Liu2016} proposed an RNN to predict the next location considering continuous spatial and temporal features. SERM \citep{yao2017serm} is a spatiotemporal model based on RNNs to predict the next stop that uses semantic trajectories obtained from social media. TA-TEM \citep{zhao2018time} predicts the next stop by learning from sequence of check-ins considering temporal and general user preferences.

There are a few works that consider trajectory modeling from data acquired by external sensors. Data from external sensors are typically very sparse and irregular, since these are placed only at selected places in a city's street network. \citet{cruz2019trajectory} proposed an RNN model to predict the next location from moving object trajectories captured by external sensors (e.g., traffic surveillance cameras) placed on the roadside. They also coped with the incompleteness and sparsity problems that are inherent to trajectories captured by sensors, and proposed a scheme to integrate the solutions to such problems into the prediction model. \citet{cruz2020location} extended their previous work to overcome some limitations. In this way, instead of a single task model, they proposed a recurrent multi-task learning approach that uses both temporal and spatial information in the training phase to jointly learn more meaningful representations of time and space.

As we have summarized, most of the works in the literature apply black box models, such as artificial neural networks, which do not explicitly incorporate the structure of the problem. In contrast, our proposed approach is transparent, interpretable and statistically principled, taking into account the fact that trajectories are generated by intelligent agents. In Section \ref{sec:MDP}, we detail the mathematical formulation of our approach. We were motivated by modeling trajectories from sparse data obtained from external sensors, but our approach can also be applied to data obtained from sources with higher sampling rates such as GPS. 

\section{Markov Decision Process Formulation}
\label{sec:MDP}
We start by modeling the generation of trajectories by drivers as a Markov decision process (MDP). Let $\mathcal{S}$ be a finite set  of locations (the states of the environment) through which a vehicle may travel during its trajectory from an origin to a destination. We assume each location has an external sensor by which a vehicle is detected if its trajectory includes the location. The observed trajectory of a vehicle is composed of a sequence of locations $\tau = [s_0,s_1, \dots, s_n]$ identified by the corresponding sensors, in which $o = s_0$ is the observed origin of the vehicle, $d = s_n$ is its observed destination and $n \in \mathbb{N}$.

We assume that, when a vehicle is at a location $s \in \mathcal{S}$ (i.e., it is in the geographical region a sensor is located), the driver chooses a next location $a \in \mathcal{A}_{s}$ as part of his/her trajectory to reach the destination $d$, in which $\mathcal{A}_{s}$ denotes the reachable locations from $s$. In addition, the driver often has context information on the current location, which we represent as a latent random variable $\epsilon$ with a state space $\mathcal{E}$. We then define an \emph{extended state} $(s,\epsilon)$, which is fully visible to the driver, but an observer can see only the locations $s$ detected by the sensors. The part of the state corresponding to $\epsilon$ represents information that only the agent has access.

Moreover, we denote the transition probability from a current state $(s,\epsilon)$ to a next state $(s',\epsilon')$ given a decision $a$ as
\begin{equation}
 p(s',\epsilon'|s,\epsilon,a), \quad \forall (s,\epsilon) \in \mathcal{S}\times \mathcal{E},\, \forall a \in \mathcal{A}_s.
\end{equation}
Since $\epsilon$ represents contextual information related to a location $s$, we assume that it depends only on the current location, such that we can decompose the transition probability as
\begin{equation}
    p(s',\epsilon'|s,\epsilon,a) = p(\epsilon'|s')p(s'|s, \epsilon,a).
\end{equation}
Furthermore, it seems reasonable to assume that a driver always go to a location he/she has decided to go, i.e., $p(s'|s, \epsilon,a) = 1$ if $s' = a$, and $p(s'|s,\epsilon,a) = 0$ otherwise. (Perhaps, in the case of autonomous vehicles, there is a chance that a vehicle may go to a location not chosen by a user, but the vehicles in our study are driven by humans.) In this way, we have
\begin{equation}
    p(s',\epsilon'|s,\epsilon,a) = p(\epsilon'|s')\delta_{s'a},
    \label{eq:transition_prob}
\end{equation}
in which $\delta_{s'a}$ is the Kronecker delta.

In addition, we assume drivers are trying to maximize their total cumulative rewards in driving from an origin to a destination. Although a natural reward to maximize would be the negative of the total distance of the trajectory, drivers often take into account a mix of distance, time, safety and other features during a trajectory, such that the exact reward function is not disclosed to an external observer. Under the extended state, the reward earned from making a decision $a$ in a state $(s,\epsilon)$ is assumed to be
\begin{equation}
  R(s, \epsilon, a) = r(s,a)+ \epsilon, \label{eq:reward_}
\end{equation}
in which $r(s,a)$ is the deterministic part of the reward function and $\epsilon$ is the stochastic part. The deterministic part may be written as a linear combination of observed features related to both the observed state $s$ and the decision to go to a next location $a$:
\begin{equation}
    r(s,a) = \sum_{k=1}^K \phi_k(s,a) \beta_k.  \label{eq:reward_function_}
\end{equation}
Notice here that this representation of the reward function parallels the definition of random utility in \eqref{eq:random_utility}. A key difference though is that the reward is a function of both the extended state and the decision, while in \eqref{eq:random_utility} there is no notion of state and the features are related only to the possible decisions.

As we assumed that a driver always go to a location he/she has
decided to go, then a decision is given by $a = s'$, in which $s'$ is the next location of the vehicle, and \eqref{eq:reward_} may be alternatively written as
\begin{equation}
  R(s, \epsilon, s') = r(s,s')+ \epsilon, \label{eq:reward}
\end{equation}
in which $r(s,s')$ is the deterministic part of the reward related to the decision of going to location $s'$, given by
\begin{equation}
    r(s,s') = \sum_{k=1}^K \phi_k(s,s') \beta_k,  \label{eq:reward_function}
\end{equation}
$\phi_k(s,s')$ are features associated with a pair of locations $s$ and $s'$, observed by an observer, and $\beta_k, k \in  \{1, 2, \dots, K\}$ are corresponding parameters.

As intelligent agents, we assume drivers are acting optimally in relation to an unknown reward function. (Clearly unknown to an external observer, but known to drivers.) The optimal behavior is represented by an optimal policy, which is a decision function that maximizes the expected cumulative rewards in a trajectory. It is known from MDP theory that an optimal policy followed by an agent is greedy in relation to an optimal value function $v^\star$, which satisfies Bellman's equation \citep{puterman}:
\begin{equation}
    v^\star(s,\epsilon) =\max_{s' \in \mathcal{A}_s} \Big\{r(s,s') + \epsilon+\gamma \mathbb{E}[v^\star(s',\epsilon')]  \Big\}, \label{eq:optimal_value_function}
\end{equation}
in which the expected value is computed relative to the transition probability \eqref{eq:transition_prob} and $0 < \gamma\leq 1$ is a discount factor.  In addition, since $\epsilon$ is a random variable, the optimal value function \eqref{eq:optimal_value_function} is a also a random variable. We can make the expression tractable if we assume that the errors $\epsilon$ are independent and identically distributed Gumbel variables with zero mean and scale factor $\alpha$. Now notice that the random variable defined as
\begin{equation}
    q(s,\epsilon,s') = r(s,s') +\gamma \mathbb{E}[v^\star(s',\epsilon')] + \epsilon,
\end{equation}
also has a Gumbel density. Moreover, due to the max-stability property of Gumbel random variables, $v^\star(s,\epsilon)$ is also a Gumbel random variable with scale $\alpha$ and expected value
\begin{equation}
    \mathbb{E}[v^\star(s,\epsilon)] = \mathbb{E}\Big[\max_{s' \in \mathcal{A}_s} \Big\{r(s,s') +\gamma \mathbb{E}[v^\star(s',\epsilon')] + \epsilon\Big\} \Big].
\end{equation}
Finally, by calling $v_{\bm{\theta}}(s) = \mathbb{E}[v^\star(s,\epsilon)]$ and from the log-sum-exp formula \eqref{eq:logsum}, we have
\begin{equation}
    v_{\bm{\theta}}(s) = \alpha \ln \Bigg (\sum_{s' \in  \mathcal{A}_s}e^{[r(s,s')+\gamma v_{\bm{\theta}}(s')]/\alpha} \Bigg), \quad \forall s \in \mathcal{S}, \label{eq:logsumexp}
\end{equation}
in which $\bm{\theta}$ is a vector of parameters and we have omitted the dependence of $r(s,s')$ on $\bm{\theta}$. We call $v_{\bm{\theta}}: \mathcal{S}\to \mathbb{R}$ the expected value function. It is the fixed point of an operator $T_{\bm{\theta}}$ defined by the right-hand side of \eqref{eq:logsumexp}. In addition, as the destination $d$ is an absorbing state, we define $v_{\bm{\theta}}(d) = 0$, so that $v_{\bm{\theta}}$ is also a function of $d$. (In further developments below we assume conditioning on $d$ is implicit.)

Under the aforementioned assumptions and according to \eqref{eq:logit}, the conditional probability that a driver goes to location $s'$ given that he/she is currently at location $s$ during a trajectory $[s_0, s_1, ...]$ with origin $o = s_0$ and destination $d$ is
\begin{equation}
    p(s'|s,o,d,\bm{\theta}) = \frac{e^{[r(s,s')+\gamma v_{\bm{\theta}}(s')]/\alpha}}{\sum_{s'' \in \mathcal{A}_{s}} e^{[r(s,s'')+\gamma v_{\bm{\theta}}(s'')]/\alpha}}, \quad \forall s' \in \mathcal{S}. \label{eq:choice_probability}
\end{equation}
Notice that the choice probability given by \eqref{eq:choice_probability} has the same mathematical form as a stochastic policy known as Boltzmann policy in RL literature. However, in our modeling it has a different interpretation: the agent is not following a stochastic policy, but rather it is following an optimal deterministic policy (a greedy policy in relation to the optimal value function given by Bellman's equation \eqref{eq:optimal_value_function}) which \emph{appears} to us as stochastic because we do not fully observe the extended state $(s,\epsilon)$. Finally, it is worth noting that \eqref{eq:choice_probability} will be defined only if the expected value function $v_{\bm{\theta}}$ exists and can be computed. In Section \ref{sec:existence}, we establish conditions for the existence of $v_{\bm{\theta}}$.

\subsection{Existence of Expected Value Functions}
\label{sec:existence}
For any real-valued function $f(s), \forall s \in \mathcal{S}$, we define the log-sum-exp operator as
\begin{equation}
    (T_{\bm{\theta}}f)(s) \coloneqq \alpha \ln \Bigg (\sum_{s' \in  \mathcal{A}_s}e^{[r(s,s')+\gamma f(s')]/\alpha} \Bigg), \quad \forall s \in \mathcal{S}.
    \label{eq:logsumexp_operator}
\end{equation}
The expected value function $v_{\bm{\theta}}$ is a fixed point of operator \eqref{eq:logsumexp_operator} if such a fixed point exists. We prove below some results which assure the existence and uniqueness of a fixed point.

\begin{proposition}
\label{prop:add_constant}
    For a real-valued function $f(s)$ defined on $\mathcal{S}$, let $g(s) = f(s)+c, \forall s \in \mathcal{S}$ and $c \in \mathbb{R}$. Then
    \begin{equation}
        (T_{\bm{\theta}}g)(s) = (T_{\bm{\theta}}f)(s)+\gamma c, \quad \forall s \in \mathcal{S}.
    \end{equation}
\end{proposition}
\begin{proof}
    Notice that
    \begin{align}
        (T_{\bm{\theta}}g)(s) & = \alpha \ln \Bigg (\sum_{s' \in  \mathcal{A}_s}e^{[r(s,s')+\gamma (f(s')+c)]/\alpha} \Bigg), \quad \forall s \in \mathcal{S} \\
        & = \alpha \ln \Bigg (e^{\gamma c /\alpha}\sum_{s' \in  \mathcal{A}_s}e^{[r(s,s')+\gamma f(s')]/\alpha} \Bigg), \quad \forall s \in \mathcal{S} \\
        & = \gamma c +\alpha \ln \Bigg (\sum_{s' \in  \mathcal{A}_s}e^{[r(s,s')+\gamma f(s')]/\alpha} \Bigg), \quad \forall s \in \mathcal{S} \\
        & =\gamma c+ (T_{\bm{\theta}}f)(s), \quad \forall s \in \mathcal{S}
    \end{align}
\end{proof}

Proposition \ref{prop:contraction} below guarantees existence of expected value functions in the case $0 < \gamma < 1$.
\begin{proposition}
\label{prop:contraction}
    Given $\bm{\beta} < \infty$, $\alpha > 0$ and $0 < \gamma < 1$, the log-sum-exp operator $T_{\bm{\theta}}$ is a contraction with respect to the uniform metric.
\end{proposition}
\begin{proof}
Let $f$ and $g$ be two bounded real-valued functions defined on $\mathcal{S}$ and
\begin{equation}
    \rho(f,g) \coloneqq \sup_{s\in \mathcal{S}}\{|f(s)-g(s)|\}.
\end{equation}
Then
\begin{equation}
    f(s) - \rho(f,g) \leq g(s) \leq f(s)+\rho(f,g), \quad \forall s \in \mathcal{S}.
\end{equation}
As the log-sum-exp function is monotonically increasing \citep[p. 245]{calafiore2014optimization}, then:
\begin{equation}
    (T_{\bm{\theta}}(f- \rho(f,g) c_1) )(s) \leq (T_{\bm{\theta}}g)(s) \leq (T_{\bm{\theta}}(f+ \rho(f,g)c_1) )(s), \quad \forall s \in \mathcal{S},
\end{equation}
in which $c_1: \mathcal{S} \to \{1\}$ denotes a constant function. In addition, from Proposition \ref{prop:add_constant}:
\begin{equation}
    (T_{\bm{\theta}}f)(s)-\gamma \rho(f,g) \leq (T_{\bm{\theta}}g)(s) \leq (T_{\bm{\theta}}f)(s)+\gamma \rho(f,g), \quad \forall s \in \mathcal{S}
\end{equation}
and then
\begin{equation}
    |(T_{\bm{\theta}}f)(s)-(T_{\bm{\theta}}g)(s)| \leq \gamma \rho(f,g), \quad \forall s \in \mathcal{S},
\end{equation}
from which we conclude that
\begin{equation}
    \sup_{s\in \mathcal{S}}\{|(T_{\bm{\theta}}f)(s)-(T_{\bm{\theta}}g)(s)|\} \leq \gamma \sup_{s\in \mathcal{S}}\{|f(s)-g(s)|\},
\end{equation}
which asserts that $T_{\bm{\theta}}$ is a contraction with respect to the uniform metric.
\end{proof}
Let $(\mathbb{R}^\mathcal{S}, \rho)$ be a metric space in which $\rho$ is the uniform metric. If $(\mathbb{R}^\mathcal{S}, \rho)$ is complete (which is certainly true for finite $\mathcal{S}$), and as the log-sum-exp is a contraction (Proposition \ref{prop:contraction}), then the expected value function $v_{\bm{\theta}}$ is the unique fixed point and may be found by fixed-point iteration according to Banach's fixed-point theorem \citep{smart1980fixed}. Algorithm \ref{alg:fixed-point} describes a fixed-point iteration to compute the expected value function $v_{\bm{\theta}}$. Notice that the fixed-point iteration converges only asymptotically to the expected value function, such that the algorithm has to be interrupted after a finite number of iterations when a stopping criterion is met. Let $v_{\bm{\theta}}^{(j)}$ and $v_{\bm{\theta}}^{(j+1)}$ be two consecutive approximate value functions corresponding to iterations $j$ and $j+1$ of Algorithm \ref{alg:fixed-point}. We then stop the fixed-point iteration when $||v_{\bm{\theta}}^{(j+1)} - v_{\bm{\theta}}^{(j)}||_{\infty} < \xi$. The algorithm returns a $\xi$-approximate value function $v^{\xi}_{\bm{\theta}}$, which is within the ball centered in $v_{\bm{\theta}}^{(j)}$ with radius $\xi$.
\begin{algorithm}[t]
	\caption{Fixed-point iteration to compute $v_{\bm{\theta}}$}
	\begin{algorithmic}[1]
	    \Statex \textbf{Input:} $\mathcal{S}$, $\mathcal{A}_s \, \forall s \in \mathcal{S}$, $r(s,s')$, destination~$d$, $\bm{\theta} = (\alpha, \bm{\beta}, \gamma)$, tolerance $\xi$
		\State \textbf{initial step} Set $v_{\bm{\theta}}^{(0)}(s) \gets 0, \, \forall s \in \mathcal{S}$, $j \gets 0$
		\While{\text{True}}
		\For{$s \in \mathcal{S}, s \neq d$}
		\State $v_{\bm{\theta}}^{(j+1)}(s)\! \gets \!\alpha\ln \Big(\sum_{s' \in  \mathcal{A}_s}e^{[r(s,s')+\gamma v_{\bm{\theta}}^{(j)}(s')]/\alpha} \Big)$
		\EndFor
		\If {$||v_{\bm{\theta}}^{(j+1)} - v_{\bm{\theta}}^{(j)}||_{\infty} < \xi$}
		    \State $v^{\xi}_{\bm{\theta}} \gets v_{\bm{\theta}}^{(j+1)}$
		    \State \textbf{break}
		\Else
		    \State $j \gets j+1$
		\EndIf
		\EndWhile
	\State \textbf{return} $v^{\xi}_{\bm{\theta}}$ \Comment{$\xi$-approximate value function}
	\end{algorithmic}
\label{alg:fixed-point}
\end{algorithm}

In the case $\gamma = 1$, the log-sum-exp operator is no longer a contraction. However, in this case we can formulate the fixed-point equation as a linear system of equations, whose solution corresponds to a fixed point of \eqref{eq:logsumexp}. Initially, notice that by exponentiating \eqref{eq:logsumexp}, we have
\begin{equation}
    e^{v_{\bm{\theta}}(s)/\alpha} = \sum_{s' \in  \mathcal{A}_s}e^{[r(s,s')+ v_{\bm{\theta}}(s')]/\alpha}, \quad \forall s \in \mathcal{S}.
\end{equation}
We further redefine the expected value function as
\begin{equation}
    \Tilde{v}_{\bm{\theta}}(s) = e^{v_{\bm{\theta}}(s)/\alpha}, \quad \forall s \in \mathcal{S} \label{eq:redefinition_value_function}
\end{equation}
and define
\begin{equation}
    u(s,s') = e^{r(s,s')/\alpha}
\end{equation}
so that
\begin{equation}
    \Tilde{v}_{\bm{\theta}}(s) = \sum_{s' \in  \mathcal{A}_s}u(s,s')\Tilde{v}_{\bm{\theta}}(s'), \quad \forall s \in \mathcal{S} \label{eq:linear_equations_value_function}.
\end{equation}
Furthermore, as the destination $d \in \mathcal{S}$ is an absorbing state with $v_{\bm{\theta}}(d) = 0$ by definition, then $\Tilde{v}_{\bm{\theta}}(d) = 1$ and $u(d,s')  = 0, \forall s' \in \mathcal{S}$. Finally, for countable $\mathcal{S}$, we can define a column vector
\begin{equation}
\bm{\Tilde{v}_{\theta}} = 
    \begin{bmatrix}
        \Tilde{v}_{\bm{\theta}}(s_1) &
        \Tilde{v}_{\bm{\theta}}(s_2) &
        \cdots
    \end{bmatrix}^\mathsf{T},
\end{equation}
and a matrix
\begin{equation}
    \bm{U} = \begin{bmatrix}
        u(s_1, s_1) & u(s_1, s_2) & \cdots \\
        u(s_2,s_1) & u(s_2,s_2) & \cdots \\
        \vdots & \vdots & \ddots
    \end{bmatrix}
\end{equation}
in which $\{s_1, s_2, \dots\} \subseteq \mathcal{S}$.

Proposition \ref{prop:gamma_1} establishes conditions for the existence of the expected value function when $\gamma = 1$.
\begin{proposition}
    If $\gamma = 1$ and the matrix $\bm{I}-\bm{U}$ is nonsingular, then the expected value function $v_{\bm{\theta}}$ exists and may be obtained by solving
    \begin{equation}
        \bm{\Tilde{v}_{\theta}} = (\bm{I} - \bm{U})^{-1} \bm{1}_d \label{eq:linear_system_value_function}
    \end{equation}
and assigning
\begin{equation}
    v_{\bm{\theta}}(s) = \alpha \ln(\Tilde{v}_{\bm{\theta}}(s)), \quad \forall s \in \mathcal{S}.
\end{equation}
\label{prop:gamma_1}
\end{proposition}
\begin{proof}
We can write \eqref{eq:linear_equations_value_function} in matrix form as
\begin{equation}
    \bm{\Tilde{v}_{\theta}} = \bm{U} \bm{\Tilde{v}_{\theta}}.
\end{equation}
By defining
\begin{equation}
    \bm{1}_d = \begin{bmatrix}
        0 &
        0 &
        \dots &
        1 &
        \dots &
        0 &
        \dots
    \end{bmatrix}^\mathsf{T},
\end{equation}
i.e., a column vector of zeros and a single value 1 corresponding to the destination $d$, and rearranging terms we have
\begin{align}
(\bm{I} - \bm{U})\bm{\Tilde{v}_{\theta}} & = \bm{1}_d \\
\bm{\Tilde{v}_{\theta}} & = (\bm{I} - \bm{U})^{-1} \bm{1}_d,
\end{align}
in which $\bm{I}$ is the identity matrix. Then, from \eqref{eq:redefinition_value_function} we have $v_{\bm{\theta}}(s) = \alpha \ln(\Tilde{v}_{\bm{\theta}}(s)), \forall s \in \mathcal{S}$
\end{proof}
Notice that, depending on the values of the parameters $\bm{\theta}$, the matrix $\bm{I} - \bm{U}$ may be singular and the expected value function does not exist.

\subsection{Relation to Maximum Entropy IRL}
\label{sec:max_ent}
In this section, we show that maximum entropy IRL (c.f. Section \ref{sec:IRL}) may be obtained from RU-IRL and can be seen as a particular case. We also show that the normalization constant $z(\bm{\theta})$ in \eqref{eq:_max_ent} may be computed exactly even if the number of possible trajectories is infinite. First, let $\tau = [s_0, s_1, \dots, s_n]$ be a trajectory, with origin $ o = s_0$ and destination $d = s_n$. From \eqref{eq:choice_probability} and the Markov assumption, the conditional probability of a trajectory, given an $(o,d)$ pair and the parameters $\bm{\theta}$, is
\begin{align}
p(\tau|o,d,\bm{\theta}) & = \prod_{t=0}^{n-1} p(s_{t+1}|s_t,o,d,\bm{\theta}) \notag\\
& = \prod_{t=0}^{n-1} \frac{e^{[r(s_t,s_{t+1})+\gamma v_{\bm{\theta}}(s_{t+1})]/\alpha}}{\sum_{s' \in \mathcal{A}_{s_t}} e^{[r(s_t,s')+\gamma v_{\bm{\theta}}(s')]/\alpha}}. \label{eq:trajectory_probability}
\end{align}
In the case $\gamma = 1$, we can show that \eqref{eq:trajectory_probability} simplifies according to Proposition \ref{prop:entropy}.
\begin{proposition}
	If $\gamma = 1$, the probability of a trajectory $\tau = [s_0, s_1, s_2 \dots, s_{n}]$ is given by
	\begin{equation}
	p(\tau|o,d,\bm{\theta}) = \frac{e^{\sum_{t=0}^{n-1}{r(s_t,s_{t+1})/\alpha}}}{e^{v_{\bm{\theta}}(s_0)/\alpha}}. \label{eq:trajectory_probability_simplified}
	\end{equation}
	\label{prop:entropy}
\end{proposition}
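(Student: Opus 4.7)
The plan is to start from the trajectory probability in Eq.~\eqref{eq:trajectory_probability} with $\gamma = 1$ substituted, and recognize the denominator of each factor as an exponentiated value function by appealing to the Bellman-type fixed point equation \eqref{eq:logsumexp}. Concretely, with $\gamma = 1$ the softmax identity \eqref{eq:logsumexp} can be rewritten as
\begin{equation}
e^{v_{\bm{\theta}}(s_t)/\alpha} = \sum_{s' \in \mathcal{A}_{s_t}} e^{[r(s_t,s')+v_{\bm{\theta}}(s')]/\alpha},
\end{equation}
so the denominator of the $t$-th factor in \eqref{eq:trajectory_probability} collapses to $e^{v_{\bm{\theta}}(s_t)/\alpha}$.

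Next I would substitute this back into \eqref{eq:trajectory_probability}, obtaining
\begin{equation}
p(\tau|o,d,\bm{\theta}) = \prod_{t=0}^{n-1} \frac{e^{r(s_t,s_{t+1})/\alpha}\, e^{v_{\bm{\theta}}(s_{t+1})/\alpha}}{e^{v_{\bm{\theta}}(s_t)/\alpha}},
\end{equation}
and then split the product into a sum of rewards in the exponent plus a telescoping product of value-function terms. The numerator contributes $e^{v_{\bm{\theta}}(s_1)/\alpha}\cdots e^{v_{\bm{\theta}}(s_n)/\alpha}$ while the denominator contributes $e^{v_{\bm{\theta}}(s_0)/\alpha}\cdots e^{v_{\bm{\theta}}(s_{n-1})/\alpha}$; after cancellation only $e^{v_{\bm{\theta}}(s_n)/\alpha}/e^{v_{\bm{\theta}}(s_0)/\alpha}$ survives.

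Finally, since $s_n = d$ is absorbing and the construction sets $v_{\bm{\theta}}(d) = 0$, the boundary term $e^{v_{\bm{\theta}}(s_n)/\alpha}$ equals $1$, leaving the claimed expression \eqref{eq:trajectory_probability_simplified}. There is no real obstacle here: the only subtlety worth flagging explicitly is that the telescoping requires $\gamma = 1$ (otherwise a factor $\gamma$ would appear in the exponent of $v_{\bm{\theta}}(s_{t+1})$ but not of $v_{\bm{\theta}}(s_t)$, breaking cancellation) and that the boundary condition $v_{\bm{\theta}}(d)=0$ is what eliminates the terminal value-function factor.
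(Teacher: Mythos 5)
Your proof is correct and follows essentially the same route as the paper's: rewrite the per-step denominator as $e^{v_{\bm{\theta}}(s_t)/\alpha}$ via the $\gamma=1$ logsumexp identity, telescope the value-function terms across the product, and kill the terminal factor with $v_{\bm{\theta}}(s_n)=v_{\bm{\theta}}(d)=0$. Your explicit remarks about why $\gamma=1$ is needed for the telescoping and why the boundary condition matters are accurate and slightly more detailed than the paper's presentation.
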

\begin{proof}
	The probability of a trajectory $\tau$ with $\gamma = 1$ is given by
	\begin{align*}
	p(\tau|o,d,\bm{\theta}) & = \prod_{t=0}^{n-1} p(s_{t+1}|s_t,o,d,\bm{\theta}) \notag\\
	& = \prod_{t=0}^{n-1} \frac{e^{[r(s_t,s_{t+1})+ v_{\bm{\theta}}(s_{t+1})]/\alpha}}{\sum_{s' \in \mathcal{A}_{s_t}} e^{[r(s_t,s')+ v_{\bm{\theta}}(s')]/\alpha}},
	\end{align*}
and by observing that
\[
e^{v_{\bm{\theta}}(s_t)/\alpha} = \sum_{s' \in \mathcal{A}_{s_t}} e^{[r(s_t,s')+ v_{\bm{\theta}}(s')]/\alpha},
\]
we have
\begin{align*}
p(\tau|o,d,\bm{\theta}) & = \prod_{t=0}^{n-1} \frac{e^{[r(s_t,s_{t+1})+ v_{\bm{\theta}}(s_{t+1})]/\alpha}}
{e^{v_{\bm{\theta}}(s_t)/\alpha}}\\
&  = e^{\sum_{t=0}^{n-1}[r(s_t,s_{t+1})+ v_{\bm{\theta}}(s_{t+1})-v_{\bm{\theta}}(s_t)]/\alpha}.
\end{align*}
By further noticing that consecutive terms $v_{\bm{\theta}}(s_1)-v_{\bm{\theta}}(s_0)+v_{\bm{\theta}}(s_2)-v_{\bm{\theta}}(s_1)...$ cancel out in the sum, and that $v_{\bm{\theta}}(s_n) = 0$, we have
\[ p(\tau|o,d,\bm{\theta}) = \frac{e^{\sum_{t=0}^{n-1}{r(s_t,s_{t+1})/\alpha}}}{e^{v_{\bm{\theta}}(s_0)/\alpha}}.\]
\end{proof}
Notice that \eqref{eq:trajectory_probability_simplified} corresponds to the maximum entropy probability distribution \eqref{eq:_max_ent} over trajectories, in which $e^{v_{\bm{\theta}}(s_0)/\alpha}$ corresponds to the normalization constant $z(\bm{\theta})$ over the countable set of possible trajectories between $(o,d)$ pair. In this way, maximum entropy IRL is a particular case of random utility IRL corresponding to the assumption of a Gumbel density function for the unobserved errors in the reward function \eqref{eq:reward} and discount factor $\gamma = 1$. Furthermore, it is worth noting that we do not need to enumerate all trajectories to compute the normalization constant. This means that we can compute the exact normalization constant even if the set of possible trajectories is infinite, since we can compute $v_{\bm{\theta}}(s_0)$ by solving \eqref{eq:linear_system_value_function}, whose size does not depend on the number of trajectories.

\section{Parameter Estimation}
\label{sec:parameter_estimation}
Given a set of observed trajectories $\mathcal{T} = \{\tau_i\}_{i=1}^m$, assuming that trajectories are independent, the likelihood function of the data is given by
\begin{equation}
    p(\mathcal{T}|\bm{\theta}) = \prod_{i=1}^m  p(\tau_i|o_i,d_i,\bm{\theta}),
\end{equation}
in which $p(\tau_i|o_i,d_i,\bm{\theta})$ is given by \eqref{eq:trajectory_probability}. \footnote{We notice that, more rigorously, we should condition on the (o,d) pairs and write $p(\mathcal{T}|(o_1,d_1),(o_2,d_2), \dots,(o_m,d_m), \bm{\theta})$, but since the (o,d) pairs are assumed to be observed in the trajectories, we have that the marginal $ p(\mathcal{T}|\bm{\theta}) = p(\mathcal{T}|(o_1,d_1),(o_2,d_2), \dots,(o_m,d_m), \bm{\theta})$.} The parameters $\bm{\theta}$ may be estimated by standard techniques such as maximum likelihood or Bayesian inference.

When talking about parameter estimation, a natural question arises regarding identifiability of parameters. We first prove the Proposition \ref{prop:scale} below which will be used to prove the main Proposition \ref{prop:identifiability} about identifiability of parameters.
\begin{proposition}
If both $\alpha$ and $\bm{\beta}$ are scaled by a real scalar $b \neq 0$, the expected value function $v_{\bm{\theta}}$ is also scaled by $b$.
\label{prop:scale}
\end{proposition}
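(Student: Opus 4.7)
The plan is to exploit the characterization of $v_{\bm{\theta}}$ as the unique fixed point of the Bellman operator in equation \eqref{eq:logsumexp}, together with the boundary condition $v_{\bm{\theta}}(d)=0$. Since the proposition states that multiplying both $\alpha$ and each $\beta_k$ by $b$ yields a new value function equal to $b\,v_{\bm{\theta}}$, it suffices to plug the candidate $b\,v_{\bm{\theta}}$ into the rescaled Bellman equation and check that it is indeed a fixed point; uniqueness (via the contraction property cited after \eqref{eq:logsumexp}) then forces this candidate to coincide with the new value function.

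First I would record the effect of the rescaling on the reward: since $r(s,s')=\sum_{k=1}^{K}\phi_k(s,s')\beta_k$ is linear in $\bm{\beta}$, replacing $\bm{\beta}$ by $b\bm{\beta}$ produces the scaled reward $\tilde r(s,s')=b\,r(s,s')$. Writing $\tilde\alpha=b\alpha$ and letting $\tilde v$ denote the value function associated with the rescaled parameters, the rescaled Bellman equation reads
\begin{equation*}
\tilde v(s)=\tilde\alpha\ln\!\left(\sum_{s'\in\mathcal{A}_s} e^{[\tilde r(s,s')+\gamma\tilde v(s')]/\tilde\alpha}\right),\quad \tilde v(d)=0.
\end{equation*}

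Next I would substitute $bv_{\bm{\theta}}$ for $\tilde v$ on the right-hand side and show that the factor $b$ pulls cleanly through. The key observation is that inside the exponent one has $[b\,r(s,s')+\gamma\,b\,v_{\bm{\theta}}(s')]/(b\alpha)=[r(s,s')+\gamma\,v_{\bm{\theta}}(s')]/\alpha$, so the $b$'s cancel. Pulling the remaining outer factor $b$ out of $\tilde\alpha=b\alpha$, the right-hand side reduces to $b\,\alpha\ln(\sum_{s'}e^{[r(s,s')+\gamma v_{\bm{\theta}}(s')]/\alpha})$, which by \eqref{eq:logsumexp} equals $b\,v_{\bm{\theta}}(s)$. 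The boundary condition is trivially preserved since $b\cdot 0=0$.

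Finally I would invoke uniqueness of the fixed point of the rescaled operator (still a contraction since it has the same structural form) to conclude $\tilde v(s)=b\,v_{\bm{\theta}}(s)$ for all $s\in\mathcal{S}$. The proof is essentially algebraic; the only mild subtlety is to keep track of which $b$'s belong to the reward and which to the scale factor so that the cancellation in the exponent is visible, and to remark that the contraction argument underlying uniqueness still applies under the rescaled parameters.
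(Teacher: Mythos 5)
Your proof is correct and follows essentially the same route as the paper's: both arguments rest on the algebraic cancellation of $b$ inside the logsumexp exponent and the fixed-point characterization of $v_{\bm{\theta}}$. The only cosmetic difference is direction — the paper divides the rescaled equation by $b$ to show $v'_{\bm{\theta}}/b$ satisfies the original Bellman equation, while you verify that $b\,v_{\bm{\theta}}$ is a fixed point of the rescaled operator and invoke uniqueness explicitly (which the paper leaves implicit).
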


\begin{proof}
	Let $v_{\bm{\theta}}$ be a value function defined by
	\[v_{\bm{\theta}}(s) = \alpha \ln \Bigg (\sum_{s' \in  \mathcal{A}_s}e^{[\sum_{k=1}^K \phi(s,s')\beta_k+\gamma v_{\bm{\theta}}(s')]/\alpha} \Bigg) \quad \forall s \in \mathcal{S},
	\]
and let $\alpha' = b \alpha$ and $\bm{\beta}' = b\bm{\beta}'$. Then
	\begin{align*}
	v'_{\bm{\theta}}(s) & = \alpha' \ln \Bigg (\sum_{s' \in  \mathcal{A}_s}e^{[\sum_{k=1}^K \phi(s,s')\beta_k'+\gamma v'_{\bm{\theta}}(s')]/\alpha'} \Bigg) \\
	 & = b\alpha \ln \Bigg (\sum_{s' \in  \mathcal{A}_s}e^{[\sum_{k=1}^K \phi(s,s')b\beta_k+\gamma v'_{\bm{\theta}}(s')]/b\alpha} \Bigg)\\
	 & = b\alpha \ln \Bigg (\sum_{s' \in  \mathcal{A}_s}e^{[\sum_{k=1}^K \phi(s,s')\beta_k+\gamma v'_{\bm{\theta}}(s')/b]/\alpha} \Bigg)\\
	 v'_{\bm{\theta}}(s)/b & = \alpha \ln \Bigg (\sum_{s' \in  \mathcal{A}_s}e^{[\sum_{k=1}^K \phi(s,s')\beta_k+\gamma v'_{\bm{\theta}}(s')/b]/\alpha} \Bigg), \quad \forall s \in \mathcal{S},
	\end{align*}
from which we conclude that $v'_{\bm{\theta}}(s) = b v_{\bm{\theta}}(s), \quad \forall s \in \mathcal{S}$.
\end{proof}

\begin{proposition}
	Scaling both parameters $\alpha$ and $\bm{\beta}$ by a real scalar $b \neq 0$ does not change the likelihood of a trajectory $\tau$.
	\label{prop:identifiability}
\end{proposition}
\begin{proof}
	Let the conditional probability of next location $s'$ be given by
	\[p(s'|s,o,d,\bm{\theta}) = \frac{e^{[\sum_{k=1}^K \phi(s,s')\beta_k+\gamma v_{\bm{\theta}}(s')]/\alpha}}{\sum_{s'' \in \mathcal{A}_{s}} e^{[\sum_{k=1}^K \phi(s,s'')+\gamma v_{\bm{\theta}}(s'')]/\alpha}}, \forall s' \in \mathcal{S}. \]
	Let $p'(s'|s,o,d,\bm{\theta})$ be the conditional probability with $\alpha' = b \alpha$ and $\bm{\beta}' = b\bm{\beta}'$, in which $b \neq 0$. Then
	\[
	p'(s'|s,o,d,\bm{\theta}') = \frac{e^{[\sum_{k=1}^K \phi(s,s')\beta'_k+\gamma v'_{\bm{\theta}}(s')]/\alpha'}}{\sum_{s'' \in \mathcal{A}_{s}} e^{[\sum_{k=1}^K \phi(s,s')\beta'_k+\gamma v'_{\bm{\theta}}(s'')]/\alpha'}},
	\]
	and from Proposition \ref{prop:scale} we know that $v'_{\bm{\theta}}(s) = b v_{\bm{\theta}}(s), \quad \forall s \in \mathcal{S}$, so that
	\begin{align*}
	p'(s'|s,o,d,\bm{\theta}') & = \frac{e^{[\sum_{k=1}^K \phi(s,s')b\beta_k+\gamma bv_{\bm{\theta}}(s')]/b\alpha}}{\sum_{s'' \in \mathcal{A}_{s}} e^{[\sum_{k=1}^K \phi(s,s')b\beta_k+\gamma bv_{\bm{\theta}}(s'')]/b\alpha}}\\
	& = \frac{e^{[\sum_{k=1}^K \phi(s,s')\beta_k+\gamma v_{\bm{\theta}}(s')]/\alpha}}{\sum_{s'' \in \mathcal{A}_{s}} e^{[\sum_{k=1}^K \phi(s,s'')+\gamma v_{\bm{\theta}}(s'')]/\alpha}},
	\end{align*}
	from which we conclude that $p(s'|s,o,d,\bm{\theta}) = p'(s'|s,o,d,\bm{\theta})$. In this way, since scaling $\alpha$ and $\bm{\beta}$ by the same real scalar $b \neq 0$ does not change the conditional probability of the next location, the likelihood of a trajectory also does not change.
\end{proof}
Proposition \ref{prop:identifiability} implies that we cannot estimate the exact values of both parameters $\bm{\beta}$ and $\alpha$, even if we have an infinite sample. Only the ratios $\beta_1/\alpha, \beta_2/\alpha, \dots$ are identifiable, so that we can freely set the value of parameter $\alpha$ and focus only on estimating $\bm{\beta}$. Although in principle we could also try to estimate $\gamma$, since we are working in an episodic setting (a trajectory ends when the agent reaches the destination), we assume $\gamma = 1$ henceforth.

Let us consider Bayesian inference on $\bm{\beta}$, assuming $\alpha = \gamma = 1$. Given a set of observed trajectories $\mathcal{T} = \{\tau_i\}_{i=1}^m$, the posterior probability distribution is given by
\begin{equation}
    p(\bm{\beta} | \mathcal{T}) \propto \prod_{i=1}^m  p(\tau_i|o_i,d_i,\bm{\beta}) p(\bm{\beta}),
\end{equation}
in which $p(\tau_i|o_i,d_i,\bm{\beta})$ is given by \eqref{eq:trajectory_probability_simplified}. Since there is no conjugate posterior distribution, we must resort to simulation or variational techniques. We propose the use of a posterior sampler based on the Metropolis-Hastings method \citep{hastings1970monte}, given in Algorithm \ref{alg:MH}. 

Some comments on notation are needed here. In Algorithm $\ref{alg:MH}$, we denote by $v^d_{\bm{\beta}}(s)$ the expected value function computed for state $s$ when destination is $d$ and for a given value $\bm{\beta}$ of parameters (remember that we assume $\alpha = \gamma = 1$). Moreover, notice that at each iteration of the Metropolis-Hastings algorithm we have to compute the expected value function by applying Algorithm \ref{alg:fixed-point}.
\begin{algorithm}[h!]
	\caption{Metropolis-Hastings for sampling from $p(\bm{\beta} | \mathcal{T})$}
	\begin{algorithmic}[1]
	    \Statex \textbf{Input:}  Set $\mathcal{S}$, $\mathcal{A}_s \, \forall s \in \mathcal{S}$, reward function $r(s,s')$, set of trajectories $\mathcal{T}$, set of destinations $\mathcal{D}$, prior distribution $p(\bm{\beta})$, proposal distribution $g(\bm{\beta'}|\bm{\beta})$ 
		\State \textbf{initial step} Set initial $\bm{\beta}^{(0)}$, $v^d(s)^{(0)}~\gets~v^d_{\bm{\beta}^{(0)}}(s), \forall s \in \mathcal{S}, d \in \mathcal{D}$, $j \gets 0$
		\Repeat
		    \State Sample candidate $\bm{\beta}' \sim g(\bm{\beta}|\bm{\beta}^{(j)})$
		    \State Compute $v^d_{\bm{\beta}'}(s), \, \forall s \in \mathcal{S}, d \in \mathcal{D}$ \Comment{Alg. \ref{alg:fixed-point}}
		    \State Compute acceptance ratio
		    \[h = \min\Bigg\{\dfrac{\prod_{i=1}^m  p(\tau_i|o_i,d_i,\bm{\beta}')p(\bm{\beta}')g(\bm{\beta}^{(j)}|\bm{\beta}')}{\prod_{i=1}^m  p(\tau_i|o_i,d_i,\bm{\beta}^{(j)})p(\bm{\beta}^{(j)})g(\bm{\beta}'|\bm{\beta}^{(j)})},1\!\Bigg\}\]
		    \State Sample $u \sim \mathtt{unif}(0,1)$
		    \If{$u < h$}
		        \State $\bm{\beta}^{(j+1)} \gets\bm{\beta}',$
		        \State $v^d(s)^{(j+1)} \gets v^d_{\bm{\beta}'}(s), \, \forall s \in \mathcal{S}, d \in \mathcal{D}$
		    \Else 
		        \State $\bm{\beta}^{(j+1)} \gets\bm{\beta}^{(j)},$
		        \State $v^d(s)^{(j+1)} \gets v^d(s)^{(j)}, \, \forall s \in \mathcal{S}, d \in \mathcal{D}$
		    \EndIf
		    \State $j \gets j+1$
		\Until{\text{Convergence}}
	\State \textbf{return} Posterior sample $\bm{\beta}^{(0)}, \bm{\beta}^{(1)}, \dots$
	\end{algorithmic}
\label{alg:MH}
\end{algorithm}

\subsection{Online Next Location Prediction}
A frequent task in trajectory modeling is online next location prediction, in which we want to estimate the next location of an agent at time $t+1$ having observed previous locations up to time $t$. Given a partial trajectory $[o=s_0, s_1, \dots, s_t]$ starting at origin $o$, we want to compute the marginal predictive probability
\begin{equation}
    p(s'|s_{0:t}, \mathcal{T}), \quad s' \in \mathcal{A}_{s_t}, \label{eq:next_location_prob}
\end{equation}
in which $s'$ are the possible next locations, $\mathcal{T}$ is a set of previously observed full trajectories and we have used the notation $s_{0:t} = [s_0, s_1, \dots, s_t]$. Consequently, we can predict the next location as
\begin{equation}
    \hat{s} = \arg\max_{s' \in \mathcal{A}_{s_t}} p(s'|s_{0:t}, \mathcal{T}). \label{eq:next_location}
\end{equation}
Notice that the predictive probability \eqref{eq:next_location_prob} can be obtained by marginalizing over the parameters $\bm{\beta}$ and destinations $d$:
\begin{align}
p(s'|s_{0:t}, \mathcal{T}) & = \frac{p(s',s_{1:t} |s_0, \mathcal{T})}{p(s_{1:t}|s_0, \mathcal{T})} \notag \\
& = \frac{\int \sum_{d \in \mathcal{D}}p(s',s_{1:t}, d, \bm{\beta} |s_0, \mathcal{T}) d\bm{\beta}}{\int \sum_{d \in \mathcal{D}}p(s_{1:t}, d, \bm{\beta}|s_0, \mathcal{T})d\bm{\beta}} \notag\\
& =\frac{\int \sum_{d \in \mathcal{D}}p(s'|s_t, d, \bm{\beta}) p(s_{1:t}|s_0,d,\bm{\beta})p(d|s_0, \mathcal{T})p(\bm{\beta}|\mathcal{T}) d\bm{\beta} }{\int \sum_{d \in \mathcal{D}} p(s_{1:t}|s_0,d,\bm{\beta})p(d|s_0, \mathcal{T})p(\bm{\beta}|\mathcal{T}) d\bm{\beta}},
\label{eq:predictive_probability}
\end{align}
in which
\begin{equation}
    p(s_{1:t}|s_0,d,\bm{\beta}) = \prod_{i=0}^{t-1}p(s_{i+1}|s_i,d,\bm{\beta}),
\end{equation}
$\mathcal{D}$ is a set of possible destinations, $p(s_{i+1}|s_i,d,\bm{\beta})$ is given by Eq. \eqref{eq:choice_probability}, $p(d|s_0,\mathcal{T})$ is the conditional probability of the destination of the current partial trajectory given its observed origin and $p(\bm{\beta}|\mathcal{T})$ is the posterior probability distribution of $\bm{\beta}$ given the training data.

Moreover, notice that the integral in \eqref{eq:predictive_probability}  will hardly be solvable, so we resort to a Monte Carlo estimator. Given a sample $(\bm{\beta}^{(1)}, \bm{\beta}^{(2)}, \dots, \bm{\beta}^{(n)})$ drawn from the posterior $p(\bm{\beta}|\mathcal{T})$, we may approximate \eqref{eq:predictive_probability} by substituting the integrals for sums over the sample:
\begin{align}
\hat{p}(s'|s_{0:t}, \mathcal{T}) =
\frac{\sum_{i=1}^n \sum_{d \in \mathcal{D}}p(s'|s_t, d, \bm{\beta}^{(i)}) p(s_{1:t}|s_0,d,\bm{\beta}^{(i)})p(d|s_0, \mathcal{T}) }{\sum_{i=1}^n \sum_{d \in \mathcal{D}} p(s_{1:t}|s_0,d,\bm{\beta}^{(i)})p(d|s_0, \mathcal{T})}.\label{eq:approx_predictive_probability}
\end{align}
We can obtain a sample from $p(\bm{\beta}|\mathcal{T})$ by means of Algorithm \ref{alg:MH}.

\section{Case Study}
\label{sec:case}
We applied RU-IRL to real data obtained from 272 external sensors in the street network from the city of Fortaleza, Brazil, which is the fifth largest city in Brazil with a population of about 3 million people (See Figure \ref{fig:sensors}). The data correspond to car plates scanned during a time window between 4:00 p.m. and 8:00 p.m. on a Friday in September 2017. For each anonymized car plate, the data consists of the sequence of sensors which detected the plate and their corresponding timestamps.

Besides data cleaning, the main preprocessing operation we have carried out was to identify trips within a trajectory stream. We have used a cutoff of 30 min to split a trajectory stream into separate trips, i.e., when two consecutive timestamps have a time difference greater than 30 min, we assumed that this corresponds to two (or more) different trips. This specific cutoff value was chosen since, given prior knowledge on trip times in the street network of the city of Fortaleza, it is unlikely that a vehicle spends more than 30 min without being detected by any sensor. We also discarded short trips with less than 6 observations, amounting to a total of 48920 trip trajectories. Finally, we have divided the dataset in a 80/20\% training/test split.

We considered two features in the reward function (see Eq. \eqref{eq:reward_function}): the length of the shortest path in the street network between locations $s$ and $s'$, and the average travel time between locations $s$ and $s'$, corresponding, respectively, to parameters $\bm{\beta} = (\beta_1$, $\beta_2)$. We applied Algorithm \ref{alg:MH} to the training data $\mathcal{T}$ in order to sample from the posterior $p(\bm{\beta} | \mathcal{T})$. We used uninformative flat priors over $[0,+\infty)$ for both $\beta_1$ and $\beta_2$ and used a bivariate Gaussian proposal distribution $g(\bm{\beta}'|\bm{\beta}^{(j)}) = \mathcal{N}(\bm{\beta}^{(j)},\bm{\Sigma})$ with a diagonal covariance matrix $\bm{\Sigma} = \mathrm{diag}(\sigma_1^2$,$\sigma_2^2)$. We applied an adaptive procedure to tune the variances $\sigma_1^2$ and $\sigma_2^2$ of the proposal distribution in order to maintain the acceptance rate at reasonable levels. 

We first ran a Bayesian optimization algorithm to find starting values near a high density region of the posterior distribution $p(\bm{\beta} | \mathcal{T})$ and let the Markov chain run for $10^4$ iterations. Figure \ref{fig:markov_chain} exhibits the Markov chain for both $\beta_1$ and $\beta_2$, with starting values 0.01 and 15.0, respectively. It can be seen that it converges to a high density region of the posterior distribution in a few iterations and keeps wandering around this region. Figure \ref{fig:histogram} illustrates histograms of $5\times 10^3$ samples in the left tail of the Markov chain in Figure \ref{fig:markov_chain}. Posterior means for $\beta_1$ and  $\beta_2$ are $7.947 \times 10^{-5}$ and 13.67, respectively. It is noteworthy that $\beta_2$ is much larger than $\beta_1$, indicating that the time between locations is the main feature drivers are taking into account during their trajectories (we have adjusted distance and time data to equivalent scales). This makes sense if we take into account that during peak hours, in which traffic congestion is high, shortest paths in the network are not necessarily the fastest.
\begin{figure}[t]
    \centering
    \includegraphics[scale=0.85]{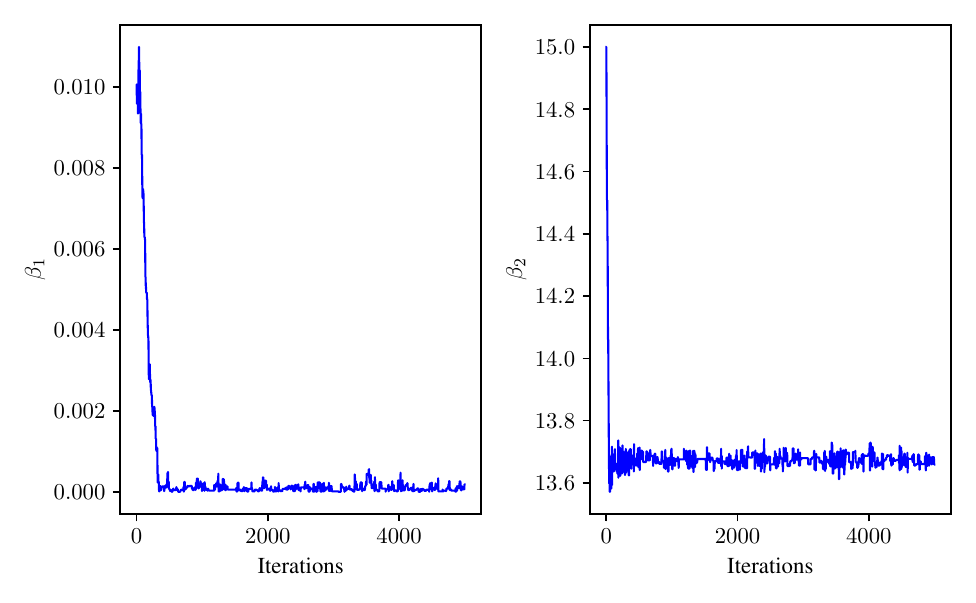}
    \caption{Markov chain generated by Algorithm \ref{alg:MH}. Starting values were 0.01 and 15.0 for $\beta_1$ and $\beta_2$, respectively, obtained by running a Bayesian optimization algorithm for some iterations. For better visualization, only the first $5\times 10^3$ samples are shown.}
    \label{fig:markov_chain}
\end{figure}
\begin{figure}[t]
    \centering
    \includegraphics[scale=0.85]{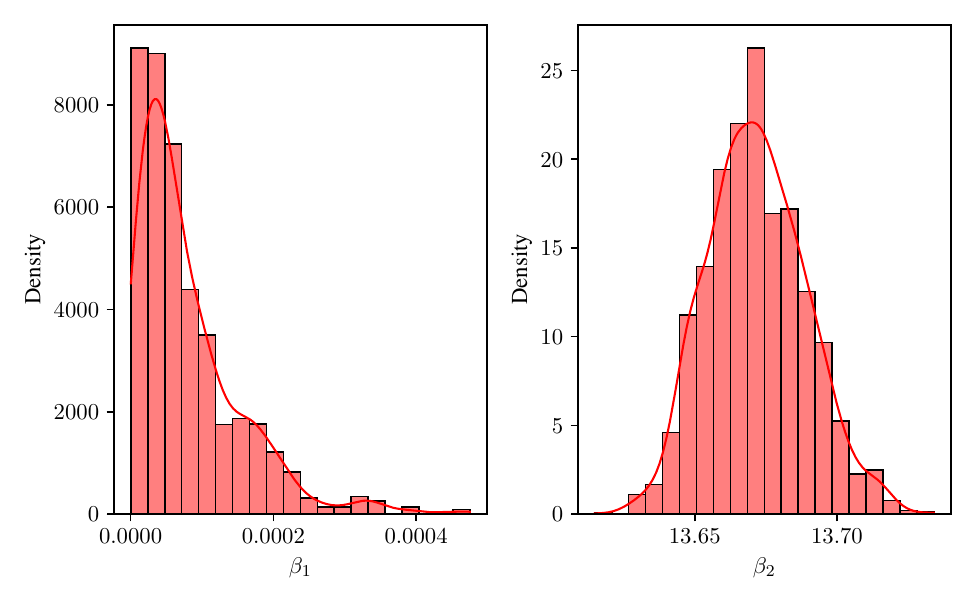}
    \caption{Histograms of $5\times 10^3$ samples in the left tail of the Markov chain generated by Algorithm \ref{alg:MH}. Posterior means for $\beta_1$ and  $\beta_2$ are $7.947 \times 10^{-5}$ and 13.67, respectively. The smooth curves over the histograms were obtained by kernel density estimation.}
    \label{fig:histogram}
\end{figure}

We further investigated the application of our model to the next location prediction task. For each of the 9784 trajectories in the holdout sample and each location in the trajectories, we set the prediction for the next location from Eq. \eqref{eq:next_location}, with posterior predictive probability computed from Eq. \eqref{eq:approx_predictive_probability} with a sample of posterior values for $\bm{\beta}$ obtained from the simulated Markov chain shown in Figure \ref{fig:markov_chain}. Notice that in \eqref{eq:approx_predictive_probability} we have to provide $p(d|s_0,\mathcal{T})$. We considered two cases: in the first one, denoted as the informed case, we approximated the conditional probabilities of the destinations by computing the relative frequencies of each destination given the possible origins in the training data $\mathcal{T}$; in the second one, denoted as the uninformed case, we simply used a uniform distribution over the possible destinations.

We compared our approach with alternative baselines, namely: a nearest-neighbor predictor, in which the predicted next location is simply the closest one according to road distance; a nearest-neighbor based on travel time; a first-order Markov predictor, which returns the predicted location as the most probable with probabilities estimated from the relative frequencies of transitions between locations observed in the training data $\mathcal{T}$; and a random predictor, which samples uniformly among the 10 nearest neighbors (according to distance) of the current location.The Markov predictor is used to produce an upper bound on the performance of the other methods, since it has a parameter for each possible transition between pairs of locations (the probability of the transition); consequently, it is very flexible and capable of capturing most of the variability in the data. In contrast, the random predictor is used to produce a lower bound, i.e., the least performance we could achieve.

We used the accuracy of the predictions as a performance metric, defined as the number of locations correctly predicted over the total number of locations observed in all trajectories in the holdout sample. We also report $\text{Acc}_{< 0.5}$ ($\text{Acc}_{< 1.0}$), which counts an incorrectly predicted location within 0.5 km (1.0 km) of the correct location as a success. These metrics are important from a practical standpoint, since a prediction error of up to 0.5 km or 1.0 km may still be  acceptable in a real application. Table \ref{tab:table_1} exhibits the results while Figure \ref{fig:accuracy} shows a graphical comparison.
\begin{table}[t]
    \centering
    \caption{Accuracy of online next location prediction methods (in \%) in the holdout sample of 9784 trajectories, with a total of 65612 observed locations (average of 5 train-test splits with different seeds). RU-IRL: Random utility inverse reinforcement learning; (inf.) and (uninf.) refer to the informed and uniformed cases, respectively; NN: Nearest Neighbor.}
    \begin{tabular}{lrrrr}
    \toprule
    Method & Acc & $\text{Acc}_{< 0.5}$ & $\text{Acc}_{< 1.0}$ \\
    \midrule
    RU-IRL (inf.)  & 69.57 & 74.70 &  77.83\\
    RU-IRL (uninf.) & 64.52  & 69.45  & 72.53   \\
    NN (distance)  & 20.74 & 29.59 & 39.28\\
    NN (time)  & 58.70  & 62.73 & 65.21 \\
    Markov & 74.10 & 78.81 & 81.51 \\
    Random & 8.07  & 12.98  & 20.65 \\
    \bottomrule
    \end{tabular}
    \label{tab:table_1}
\end{table}
\begin{figure}[t]
    \centering
    \includegraphics[scale=0.78]{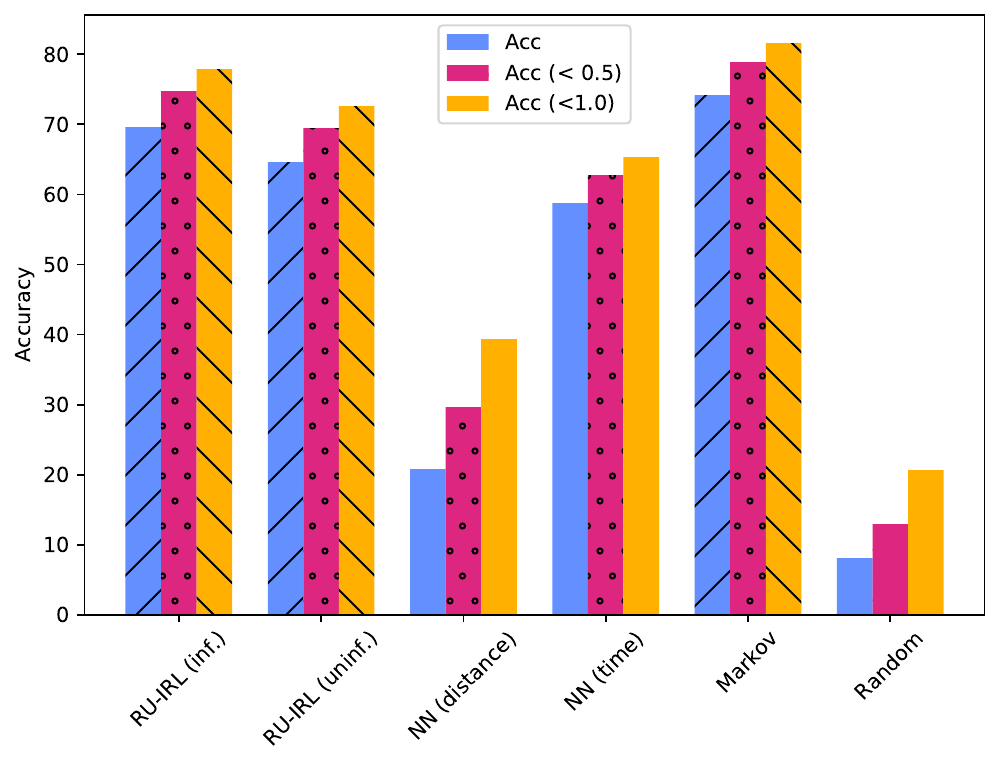}
    \caption{Accuracy of each of the alternative methods compared in the online next location prediction task. $\text{Acc}$ is the number of locations correctly predicted over the total number of locations observed in all trajectories in the holdout sample, while $\text{Acc}_{< 0.5}$ ($\text{Acc}_{< 1.0}$) counts an incorrectly predicted location within 0.5 km (1.0 km) of the correct location as a success. The value 100 denotes perfect prediction, i.e., 100\% of all locations correctly predicted.}
    \label{fig:accuracy}
\end{figure}

The predictions obtained via the RU-IRL method were more accurate than baseline methods, except for the Markov predictor, which provides an upper bound on the methods. In particular, we notice that the accuracy of both the informative and uninformative RU-IRL methods was higher than the NN predictor based on time. This is remarkable, since we know from the estimation of $\beta_1$ and $\beta_2$ (Figure \ref{fig:histogram}) that the most important feature is time. This means that the model based on RU-IRL is also capturing some underlying structure of the problem beyond the features in the reward function. 

In addition, the accuracies of the uninformative and informative RU-IRL methods were 69.57 and 64.52, respectively, while the Markov predictor achieved 74.10. In other words, the RU-IRL methods achieved approximately 87\% and  94\%, respectively, of the upper bound provided by the Markov predictor. This is noteworthy, since the Markov predictor uses the empirical conditional probability distribution of the transitions between locations, which also captures other features in the data other than distance and time not accounted for in the RU-IRL model.

We should also emphasize that the RU-IRL models are more parsimonious, with just a few parameters, while the number of estimated parameters of the Markov model corresponds to the number of transition probabilities ($272\times 271$ possible transitions, since there are 272 sensors in our application). Moreover, we notice that learning Markov models in the next location prediction task requires large amounts of data due to the \emph{zero frequency problem} \citep{begleiter2004prediction}, and training the Markov model in our case was only possible due to the large sample of trajectories used. 

Since the proposed RU-IRL approach is more parsimonious, it may be more applicable to situations with scarce data, although we did not evaluated this case in our experiments. Finally, we notice that the upper bound on the performance provided by the Markov predictor is hard to beat. For example, \citet{cruz2019trajectory} used a recurrent neural network with both location and timestamp inputs, which has a lot more parameters than a Markov model, and they achieved only a slightly better performance than a Markov model in the next location prediction task.

\section{Conclusions}
\label{sec:conclusion}
We developed a new approach, called random utility inverse reinforcement learning (RU-IRL), motivated by the problem of modeling trajectories of drivers in a road network which are observed by sensors sparsely distributed. In contrast to most approaches in the literature, which rely on black box models, RU-IRL is general, transparent, and fully interpretable. We provided a mathematical proof that maximum entropy inverse reinforcement learning, a popular IRL paradigm, is a particular case of RU-IRL. It can also be applied to other domains, since its mathematical modeling is not dependent on the particular trajectory modeling problem.

A key difference of RU-IRL to current IRL approaches in the literature is that we did not artificially assume that agents are acting according to some random policy in order to explain data variability. On the contrary, we assumed that agents are rational and act optimally according to some deterministic policy, but their apparent random behavior is due to our inability of observing all the features that agents take into account when making decisions. We made this idea rigorous by applying the concept of random utility from microeconomic theory and developing a Markov decision process formulation of the generation of trajectories by agents.

We treated how estimation of parameters can be carried out in the Markov decision process formulation and illustrated the application of RU-IRL through a case study with real data on observed trajectories generated by drivers in a large city in Brazil. We applied Bayesian inference to the data and were able to estimate the parameters related to the importance drivers assign to distance and time when making up their trajectories. We also illustrated our approach in the task of online next location prediction.

The basic setting of RU-IRL as proposed in this paper may be extended in multiple directions. For instance, taste variation among drivers may be incorporated by  building a hierarchical structure, in which parameters related to features are random according to a probability distribution higher in the hierarchy. A further extension is to model measurement errors in the sensors by explicitly assuming that exact locations of drivers are hidden and only partially observed through imprecise measurements.

\section*{Acknowledgments}
This work was in part supported by the Secretaria Nacional de Segurança Pública - Brazil (SENASP) and partially supported by FUNCAP SPU 8789771/2017 and UFC-FASTEF 31/2019.

\bibliographystyle{abbrvnat}
\bibliography{references}

\begin{thebibliography}{37}
\providecommand{\natexlab}[1]{#1}
\providecommand{\url}[1]{\texttt{#1}}
\expandafter\ifx\csname urlstyle\endcsname\relax
  \providecommand{\doi}[1]{doi: #1}\else
  \providecommand{\doi}{doi: \begingroup \urlstyle{rm}\Url}\fi

\bibitem[Abbeel and Ng(2004)]{abbeel2004}
P.~Abbeel and A.~Y. Ng.
\newblock Apprenticeship learning via inverse reinforcement learning.
\newblock In \emph{Proceedings of the Twenty-First International Conference on
  Machine Learning}, page~1, 2004.
\newblock \doi{10.1145/1015330.1015430}.

\bibitem[Arora and Doshi(2021)]{arora2021survey}
S.~Arora and P.~Doshi.
\newblock A survey of inverse reinforcement learning: Challenges, methods and
  progress.
\newblock \emph{Artificial Intelligence}, page 103500, 2021.
\newblock \doi{10.1016/j.artint.2021.103500}.

\bibitem[Begleiter et~al.(2004)Begleiter, El-Yaniv, and
  Yona]{begleiter2004prediction}
R.~Begleiter, R.~El-Yaniv, and G.~Yona.
\newblock On prediction using variable order markov models.
\newblock \emph{Journal of Artificial Intelligence Research}, 22:\penalty0
  385--421, 2004.

\bibitem[Belogolovsky et~al.(2021)Belogolovsky, Korsunsky, Mannor, Tessler, and
  Zahavy]{belogolovsky}
S.~Belogolovsky, P.~Korsunsky, S.~Mannor, C.~Tessler, and T.~Zahavy.
\newblock Inverse reinforcement learning in contextual mdps.
\newblock \emph{Machine Learning}, pages 1--40, 2021.
\newblock \doi{10.1007/s10994-021-05984-x}.

\bibitem[Ben-Akiva and Lerman(2018)]{ben2018discrete}
M.~Ben-Akiva and S.~R. Lerman.
\newblock \emph{Discrete Choice Analysis: Theory and Application to Travel
  Demand}.
\newblock MIT Press, 2018.

\bibitem[Calafiore and El~Ghaoui(2014)]{calafiore2014optimization}
G.~C. Calafiore and L.~El~Ghaoui.
\newblock \emph{Optimization Models}.
\newblock Cambridge University Press, 2014.

\bibitem[Cascetta(2009)]{cascetta2009transportation}
E.~Cascetta.
\newblock \emph{Transportation Systems Analysis: Models and Applications}.
\newblock Springer, 2009.

\bibitem[Cruz et~al.(2019)Cruz, Zeitouni, and de~Macedo]{cruz2019trajectory}
L.~A. Cruz, K.~Zeitouni, and J.~A.~F. de~Macedo.
\newblock Trajectory prediction from a mass of sparse and missing external
  sensor data.
\newblock In \emph{Proceedings of the 20th IEEE International Conference on
  Mobile Data Management (MDM)}, pages 310--319. IEEE, 2019.
\newblock \doi{10.1109/MDM.2019.00-43}.

\bibitem[Cruz et~al.(2020)Cruz, Zeitouni, da~Silva, de~Macedo, and
  da~Silva]{cruz2020location}
L.~A. Cruz, K.~Zeitouni, T.~L.~C. da~Silva, J.~A.~F. de~Macedo, and J.~S.
  da~Silva.
\newblock Location prediction: a deep spatiotemporal learning from external
  sensors data.
\newblock \emph{Distributed and Parallel Databases}, pages 1--22, 2020.
\newblock \doi{10.1007/s10619-020-07303-0}.

\bibitem[Feng et~al.(2018)Feng, Li, Zhang, Sun, Meng, Guo, and
  Jin]{feng2018deepmove}
J.~Feng, Y.~Li, C.~Zhang, F.~Sun, F.~Meng, A.~Guo, and D.~Jin.
\newblock Deepmove: Predicting human mobility with attentional recurrent
  networks.
\newblock In \emph{Proceedings of the 2018 World Wide Web Conference on World
  Wide Web}, pages 1459--1468. International World Wide Web Conferences
  Steering Committee, 2018.
\newblock \doi{10.1145/3178876.3186058}.

\bibitem[Feng et~al.(2020)Feng, Yang, Xu, Yu, Wang, and Li]{feng2020learning}
J.~Feng, Z.~Yang, F.~Xu, H.~Yu, M.~Wang, and Y.~Li.
\newblock Learning to simulate human mobility.
\newblock In \emph{Proceedings of the 26th ACM SIGKDD International Conference
  on Knowledge Discovery \& Data Mining}, pages 3426--3433, 2020.

\bibitem[Hastings(1970)]{hastings1970monte}
W.~K. Hastings.
\newblock {Monte Carlo sampling methods using Markov chains and their
  applications}.
\newblock \emph{Biometrika}, 57\penalty0 (1):\penalty0 97--109, 04 1970.
\newblock \doi{10.1093/biomet/57.1.97}.

\bibitem[Ji et~al.(2020)Ji, Wang, Wu, Shao, and Feng]{ji2020method}
Y.~Ji, L.~Wang, W.~Wu, H.~Shao, and Y.~Feng.
\newblock A method for {LSTM}-based trajectory modeling and abnormal trajectory
  detection.
\newblock \emph{IEEE Access}, 8:\penalty0 104063--104073, 2020.

\bibitem[Liu et~al.(2016)Liu, Wu, Wang, and Tan]{Liu2016}
Q.~Liu, S.~Wu, L.~Wang, and T.~Tan.
\newblock Predicting the next location : A recurrent model with spatial and
  temporal contexts.
\newblock In \emph{Proceedings of the 30th Conference on Artificial
  Intelligence (AAAI 2016)}, pages 194--200, 2016.
\newblock \doi{10.1016/j.egypro.2016.11.209}.

\bibitem[McFadden(1981)]{mcfadden1981}
D.~L. McFadden.
\newblock Econometric models of probabilistic choice.
\newblock In C.~F. Manski and D.~L. McFadden, editors, \emph{Structural
  Analysis of Discrete Data with Econometric Applications}, pages 198--272. MIT
  Press, Cambridge, MA, USA, 1981.

\bibitem[Naserian et~al.(2018)Naserian, Wang, Dahal, Wang, and
  Wang]{Naserian2018FGCS}
E.~Naserian, X.~Wang, K.~Dahal, Z.~Wang, and Z.~Wang.
\newblock {Personalized location prediction for group travellers from
  spatial–temporal trajectories}.
\newblock \emph{Future Generation Computer Systems}, 83:\penalty0 278--292,
  2018.
\newblock \doi{https://doi.org/10.1016/j.future.2018.01.024}.

\bibitem[Ng and Russell(2000)]{ng2000}
A.~Y. Ng and S.~J. Russell.
\newblock Algorithms for inverse reinforcement learning.
\newblock In \emph{Proceedings of the Seventeenth International Conference on
  Machine Learning}, ICML '00, page 663–670, San Francisco, CA, USA, 2000.
  Morgan Kaufmann Publishers Inc.

\bibitem[Puterman(1994)]{puterman}
M.~L. Puterman.
\newblock \emph{Markov Decision Processes: Discrete Stochastic Dynamic
  Programming}.
\newblock John Wiley \& Sons, Inc., USA, 1994.

\bibitem[Ramachandran and Amir(2007)]{ramachandran2007bayesian}
D.~Ramachandran and E.~Amir.
\newblock Bayesian inverse reinforcement learning.
\newblock In \emph{Proceedings of the 20th International Joint Conference on
  Artifical Intelligence}, IJCAI'07, page 2586–2591, San Francisco, CA, USA,
  2007. Morgan Kaufmann Publishers Inc.

\bibitem[Rocha et~al.(2016)Rocha, Brilhante, Lettich, De~Macedo, Raffaet\`{a},
  Andrade, and Orlando]{Rocha2016ideas}
C.~L. Rocha, I.~R. Brilhante, F.~Lettich, J.~A.~F. De~Macedo, A.~Raffaet\`{a},
  R.~Andrade, and S.~Orlando.
\newblock {TPRED}: A spatio-temporal location predictor framework.
\newblock In \emph{Proceedings of the 20th International Database Engineering;
  Applications Symposium}, IDEAS '16, pages 34--42, New York, NY, USA, 2016.
  ACM.
\newblock \doi{10.1145/2938503.2938544}.

\bibitem[Smart(1980)]{smart1980fixed}
D.~Smart.
\newblock \emph{Fixed Point Theorems}.
\newblock Cambridge Tracts in Mathematics. Cambridge University Press, 1980.

\bibitem[Sutton and Barto(2018)]{sutton}
R.~Sutton and A.~Barto.
\newblock \emph{Reinforcement Learning: An Introduction}.
\newblock MIT Press, 2018.

\bibitem[Train(2009)]{train2009discrete}
K.~E. Train.
\newblock \emph{Discrete Choice Methods with Simulation}.
\newblock Cambridge University Press, 2009.

\bibitem[Trasarti et~al.(2017)Trasarti, Guidotti, Monreale, and
  Giannotti]{Trasarti2017IS}
R.~Trasarti, R.~Guidotti, A.~Monreale, and F.~Giannotti.
\newblock {MyWay: Location prediction via mobility profiling}.
\newblock \emph{Information Systems}, 64:\penalty0 350--367, 2017.
\newblock \doi{10.1016/j.is.2015.11.002}.

\bibitem[Vroman(2014)]{vroman2014maximum}
M.~C. Vroman.
\newblock \emph{Maximum Likelihood Inverse Reinforcement Learning}.
\newblock Rutgers The State University of New Jersey-New Brunswick, 2014.

\bibitem[Wu et~al.(2017{\natexlab{a}})Wu, Fu, Wang, Xiao, and
  Fu]{wu2017spatial}
F.~Wu, K.~Fu, Y.~Wang, Z.~Xiao, and X.~Fu.
\newblock A spatial-temporal-semantic neural network algorithm for location
  prediction on moving objects.
\newblock \emph{Algorithms}, 10\penalty0 (2), 2017{\natexlab{a}}.
\newblock \doi{10.3390/a10020037}.

\bibitem[Wu et~al.(2016)Wu, Mao, Sun, Zheng, Zhang, Chen, and
  Wang]{wu2016probabilistic}
H.~Wu, J.~Mao, W.~Sun, B.~Zheng, H.~Zhang, Z.~Chen, and W.~Wang.
\newblock Probabilistic robust route recovery with spatio-temporal dynamics.
\newblock In \emph{Proceedings of the 22nd ACM SIGKDD International Conference
  on Knowledge Discovery and Data Mining}, pages 1915--1924, 2016.

\bibitem[Wu et~al.(2017{\natexlab{b}})Wu, Chen, Sun, Zheng, and
  Wang]{wu2017modeling}
H.~Wu, Z.~Chen, W.~Sun, B.~Zheng, and W.~Wang.
\newblock Modeling trajectories with recurrent neural networks.
\newblock In \emph{Proceedings of the Twenty-Sixth International Joint
  Conference on Artificial Intelligence, {IJCAI-17}}, pages 3083--3090,
  2017{\natexlab{b}}.
\newblock \doi{10.24963/ijcai.2017/430}.

\bibitem[Yao et~al.(2017)Yao, Zhang, Huang, and Bi]{yao2017serm}
D.~Yao, C.~Zhang, J.~Huang, and J.~Bi.
\newblock Serm: A recurrent model for next location prediction in semantic
  trajectories.
\newblock In \emph{Proceedings of the 2017 ACM on Conference on Information and
  Knowledge Management}, pages 2411--2414, 2017.
\newblock \doi{10.1145/3132847.3133056}.

\bibitem[Zhang et~al.(2016)Zhang, Zhang, Yuan, Zhang, Hanratty, and
  Han]{Zhang2016KDD}
C.~Zhang, K.~Zhang, Q.~Yuan, L.~Zhang, T.~Hanratty, and J.~Han.
\newblock {GMove}: Group-level mobility modeling using geo-tagged social media.
\newblock In \emph{Proceedings of the 22nd ACM SIGKDD International Conference
  on Knowledge Discovery and Data Mining}, KDD '16, page 1305–1314, New York,
  NY, USA, 2016. Association for Computing Machinery.
\newblock \doi{10.1145/2939672.2939793}.

\bibitem[Zhang et~al.(2020)Zhang, Liu, Hwang, Hwang, Ma, and Cheng]{ZHANG_irl}
T.~Zhang, Y.~Liu, M.~Hwang, K.-S. Hwang, C.~Ma, and J.~Cheng.
\newblock An end-to-end inverse reinforcement learning by a boosting approach
  with relative entropy.
\newblock \emph{Information Sciences}, 520:\penalty0 1--14, 2020.
\newblock \doi{10.1016/j.ins.2020.01.023}.

\bibitem[Zhao et~al.(2018)Zhao, Zhou, Sun, Wen, Han, and Chang]{zhao2018time}
W.~X. Zhao, N.~Zhou, A.~Sun, J.-R. Wen, J.~Han, and E.~Y. Chang.
\newblock A time-aware trajectory embedding model for next-location
  recommendation.
\newblock \emph{Knowledge and Information Systems}, 56\penalty0 (3):\penalty0
  559--579, 2018.
\newblock \doi{10.1007/s10115-017-1107-4}.

\bibitem[Zheng and Ni(2014)]{zheng2014modeling}
J.~Zheng and L.~M. Ni.
\newblock Modeling heterogeneous routing decisions in trajectories for driving
  experience learning.
\newblock In \emph{Proceedings of the 2014 ACM International Joint Conference
  on Pervasive and Ubiquitous Computing}, pages 951--961, 2014.

\bibitem[Ziebart et~al.(2008{\natexlab{a}})Ziebart, Maas, Bagnell, and
  Dey]{ziebart2008}
B.~D. Ziebart, A.~Maas, J.~A. Bagnell, and A.~K. Dey.
\newblock Maximum entropy inverse reinforcement learning.
\newblock In \emph{Proceedings of the 23rd National Conference on Artificial
  Intelligence - Volume 3}, AAAI'08, page 1433–1438. AAAI Press,
  2008{\natexlab{a}}.

\bibitem[Ziebart et~al.(2008{\natexlab{b}})Ziebart, Maas, Dey, and
  Bagnell]{ziebart2008navigate}
B.~D. Ziebart, A.~L. Maas, A.~K. Dey, and J.~A. Bagnell.
\newblock Navigate like a cabbie: Probabilistic reasoning from observed
  context-aware behavior.
\newblock In \emph{Proceedings of the 10th International Conference on
  Ubiquitous Computing}, pages 322--331, 2008{\natexlab{b}}.

\bibitem[Zou et~al.(2021)Zou, Yen, and Zhao]{ZOU_irl}
F.~Zou, G.~G. Yen, and C.~Zhao.
\newblock Dynamic multiobjective optimization driven by inverse reinforcement
  learning.
\newblock \emph{Information Sciences}, 575:\penalty0 468--484, 2021.
\newblock \doi{10.1016/j.ins.2021.06.054}.

\bibitem[Zwerina(1997)]{zwerina1997discrete}
K.~Zwerina.
\newblock \emph{Discrete Choice Experiments in Marketing}.
\newblock Springer, 1997.

\end{thebibliography}

\end{document}